\newtheorem{Thm}{Theorem}
\newtheorem{Lem}[Thm]{Lemma}
\newtheorem{Cor}[Thm]{Corollary}
\newcommand{\bbR}{{\mathbf{R}}}
\def\bbR{\mathbb{R}}
\newcommand{\E}[1]{\ensuremath{\mathbb{E}\left[#1\right]}}
\newcommand{\citep}[1]{\cite{#1}}
\newcommand{\citet}[1]{\cite{#1}}
\def\bbR{\mathbb{R}}
\newcommand{\nb}[3]{#1^{(#2)}(#3)}
\def\pr{{\mbox{\rm Pr}}}
\newcommand{\calX}{\mathcal{X}}
\def\calE{\mathcal{E}}
\def\supp{{\mbox{\rm supp}}}
\author{
Kamalika Chaudhuri \\
Computer Science and Engineering Dept.\\
University of California, San Diego\\
\texttt{kamalika@cs.ucsd.edu} \\
\And
Sanjoy Dasgupta \\
Computer Science and Engineering Dept.\\
University of California, San Diego\\
\texttt{dasgupta@cs.ucsd.edu} \\
}
\title{Rates of Convergence for Nearest Neighbor Classification}
\begin{document}

\maketitle
\begin{abstract}
Nearest neighbor methods are a popular class of nonparametric estimators with several desirable properties, such as adaptivity to different distance scales in different regions of space. Prior work on convergence rates for nearest neighbor classification has not fully reflected these subtle properties. We analyze the behavior of these estimators in metric spaces and provide finite-sample, distribution-dependent rates of convergence under minimal assumptions. As a by-product, we are able to establish the universal consistency of nearest neighbor in a broader range of data spaces than was previously known. We illustrate our upper and lower bounds by introducing smoothness classes that are customized for nearest neighbor classification.

\end{abstract}
\def\E{{\mathbb{E}}}
\def\P{{\mathbb{P}}}
\def\R{{\mathbb{R}}}
\def\X{{\mathcal{X}}}
\def\Y{{\mathcal{Y}}}

\section{Introduction}

In this paper, we deal with binary prediction in metric spaces. A classification problem is defined by a metric space $(\X, \rho)$ from which instances are drawn, a space of possible labels $\Y = \{0,1\}$, and a distribution $\P$ over $\X \times \Y$. The goal is to find a function $h: \X \rightarrow \Y$ that minimizes the probability of error on pairs $(X,Y)$ drawn from $\P$; this error rate is the {\it risk} $R(h) = \P(h(X) \neq Y)$. The best such function is easy to specify: if we let $\mu$ denote the marginal distribution of $X$ and $\eta$ the conditional probability $\eta(x) = \P(Y = 1 | X = x)$, then the predictor $1(\eta(x) \geq 1/2)$ achieves the minimum possible risk, $R^* = \E_X [\min(\eta(X), 1-\eta(X))]$. The trouble is that $\P$ is unknown and thus a prediction rule must instead be based only on a finite sample of points $(X_1, Y_1), \ldots, (X_n, Y_n)$ drawn independently at random from $\P$.

Nearest neighbor (NN) classifiers are among the simplest prediction rules. The {\it 1-NN classifier} assigns each point $x \in \X$ the label $Y_i$ of the closest point in $X_1, \ldots, X_n$ (breaking ties arbitrarily, say). For a positive integer $k$, the {\it $k$-NN classifier} assigns $x$ the majority label of the $k$ closest points in $X_1, \ldots, X_n$. In the latter case, it is common to let $k$ grow with $n$, in which case the sequence $(k_n: n \geq 1)$ defines a {\it $k_n$-NN classifier}.

The asymptotic consistency of nearest neighbor classification has been studied in detail, starting with the work of Fix and Hodges~\cite{FH51}. The risk $R_n$ is a random variable that depends on the training sample $(X_1, Y_1), \ldots, (X_n, Y_n)$; the usual order of business is to first determine the limiting behavior of the expected value $\E R_n$ and to then study stronger modes of convergence of $R_n$. Cover and Hart~\cite{CH67} studied the asymptotics of $\E R_n$ in general metric spaces, under the assumption that every $x$ in the support of $\mu$ is either a continuity point of $\eta$ or has $\mu(\{x\}) > 0$. For the 1-NN classifier, they found that $\E R_n \rightarrow \E_X [2\eta(X)(1-\eta(X))] \leq 2R^*(1-R^*)$; for $k_n$-NN with $k_n \uparrow \infty$ and $k_n/n \downarrow 0$, they found $\E R_n \rightarrow R^*$. For points in Euclidean space, a series of results starting with Stone~\cite{S77} established consistency without any distributional assumptions. For $k_n$-NN in particular, $R_n \rightarrow R^*$ almost surely~\citep{DGKL94}.

These consistency results place nearest neighbor methods in a favored category of nonparametric estimators. But for a fuller understanding it is important to also have rates of convergence. For instance, part of the beauty of nearest neighbor is that it appears to adapt automatically to different distance scales in different regions of space. It would be helpful to have bounds that encapsulate this property.

Rates of convergence are also important in extending nearest neighbor classification to settings such as active learning, semisupervised learning, and domain adaptation, in which the training data is not a fully-labeled data set obtained by i.i.d.\ sampling from the future test distribution. For instance, in active learning, the starting point is a set of unlabeled points $X_1, \ldots, X_n$, and the learner requests the labels of just a few of these, chosen adaptively to be as informative as possible about $\eta$. There are many natural schemes for deciding which points to label: for instance, one could repeatedly pick the point furthest away from the labeled points so far, or one could pick the point whose $k$ nearest labeled neighbors have the largest disagreement among their labels. The asymptotics of such selective sampling schemes has been considered in earlier work~\citep{D12}, but ultimately the choice of scheme must depend upon finite-sample behavior. The starting point for understanding this behavior is to first obtain a characterization in the non-active setting.

\subsection{Previous work on rates of convergence}

The earliest rates of convergence for nearest neighbor were distribution-free. Cover~\cite{C68} studied the 1-NN classifier in the case $\X = \R$, under the assumption of class-conditional densities with uniformly-bounded third derivatives. He showed that $\E R_n$ converges at a rate of $O(1/n^2)$. Wagner~\cite{W71} and later Fritz~\cite{F75} also looked at 1-NN, but in higher dimension $\X = \R^d$. The latter obtained an asymptotic rate of convergence for $R_n$ under the milder assumption of non-atomic $\mu$ and lower semi-continuous class-conditional densities.

Distribution-free results are of some value, but fail to precisely characterize which properties of a distribution most influence the performance of nearest neighbor classification. More recent work has investigated several different approaches to obtaining distribution-dependent bounds. Kulkarni and Posner~\cite{KP95} obtained finite-sample rates of convergence for 1-NN and $k_n$-NN in terms of the smoothness of $\eta$. They assumed that for some constants $K$ and $\alpha$, and for all $x_1, x_2 \in \X$,
$$ | \eta(x_1) - \eta(x_2) | \ \leq \ K \rho (x_1, x_2)^{2\alpha} .$$
They then gave bounds in terms of the Holder parameter $\alpha$. Gyorfi~\cite{G81} looked at the case $\X = \R^d$, under the weaker assumption that for some function $K: \R^d \rightarrow \R$ and some $\alpha$, and for all $z \in \R^d$ and all $r > 0$,
$$ \left| \eta(z) - \frac{1}{\mu(B(z,r))} \int_{B(z,r)} \eta(x) \mu(dx) \right| \leq K(z) r^{\alpha} .$$ 
This $\alpha$ is similar in spirit to the earlier Holder parameter, but does not require $\eta$ to be continuous. Gyorfi obtained asymptotic rates in terms of $\alpha$. Another generalization of standard smoothness conditions was proposed recently~\cite{UBS11} in a ``probabilistic Lipschitz'' assumption, and in this setting rates were obtained for NN classification in bounded spaces $\X \subset \R^d$.

The convergence rates obtained so far have been unsatisfactory in several regards. The finite-sample rates require continuity and thus, for instance, do not apply to discrete distributions. The use of a single Holder parameter is convenient but does not capture cases where different regions of the data space have different distance-scales: a common situation in which NN methods might be expected to shine. Most importantly, what is crucial for nearest neighbor is not how $|\eta(x)-\eta(x')|$ scales with $\rho(x,x')$---which is what any Lipschitz or Holder constant captures---but rather how it scales with $\mu(B(x,\rho(x,x')))$. In other words, a suitable smoothness parameter for NN is one that measures the change in $\eta(x)$ with respect to probability mass rather than distance. We will try to make this point clearer in the next section.

\subsection{Some illustrative examples}

We now look at a few examples to get a sense of what properties of a distribution most critically affect the convergence rate of nearest neighbor. In each case, we study the $k$-NN classifier.

As a first example, consider a finite instance space $\X$. For large enough $n$, the $k$ nearest neighbors of a query $x$ will all be $x$ itself, leading immediately to an error bound. However, this kind of reasoning yields an asymptotic rate of convergence. To get a finite-sample rate, we proceed more generally and observe that for any number of points $n$, the $k$ nearest neighbors of $x$ will lie within a ball $B = B(x,r)$ whose probability mass under $\mu$ is roughly $k/n$. 
The quality of the prediction can be assessed by how much $\eta$ varies within this ball. To be slightly more precise, let $\eta(B) = (1/\mu(B)) \int_B \eta(x) \mu(dx)$ denote the average $\eta$ value within the ball. For the $k$-NN prediction at $x$ to be good, we require that if $\eta(x)$ is significantly more than $1/2$ then so is $\eta(B)$; and likewise if $\eta(x)$ is significantly less than $1/2$.

As a second example, consider a distribution over $\X = \R$ in which the two classes ($Y = 0$ and $Y = 1$) have class-conditional densities $\mu_1$ and $\mu_2$, respectively. Assume that these two distributions are supported on disjoint intervals, as shown on the left side of Figure~\ref{fig:1d-1}. Now let's determine the probability that the $k$-NN classifier makes a mistake on a specific query $x$. Clearly, this will happen only if $x$ is near the boundary between the two classes. To be precise, consider an interval around $x$ of probability mass $k/n$, that is, an interval $B = [x-r, x+r]$ with $\mu(B) = k/n$. Then the $k$ nearest neighbors will lie roughly in this interval, and there will likely be an error only if the interval contains a substantial portion of the wrong class. Whether or not $\eta$ is smooth, or the $\mu_i$ are smooth, is irrelevant.

\begin{figure}
\begin{center}
\includegraphics[width=2.25in]{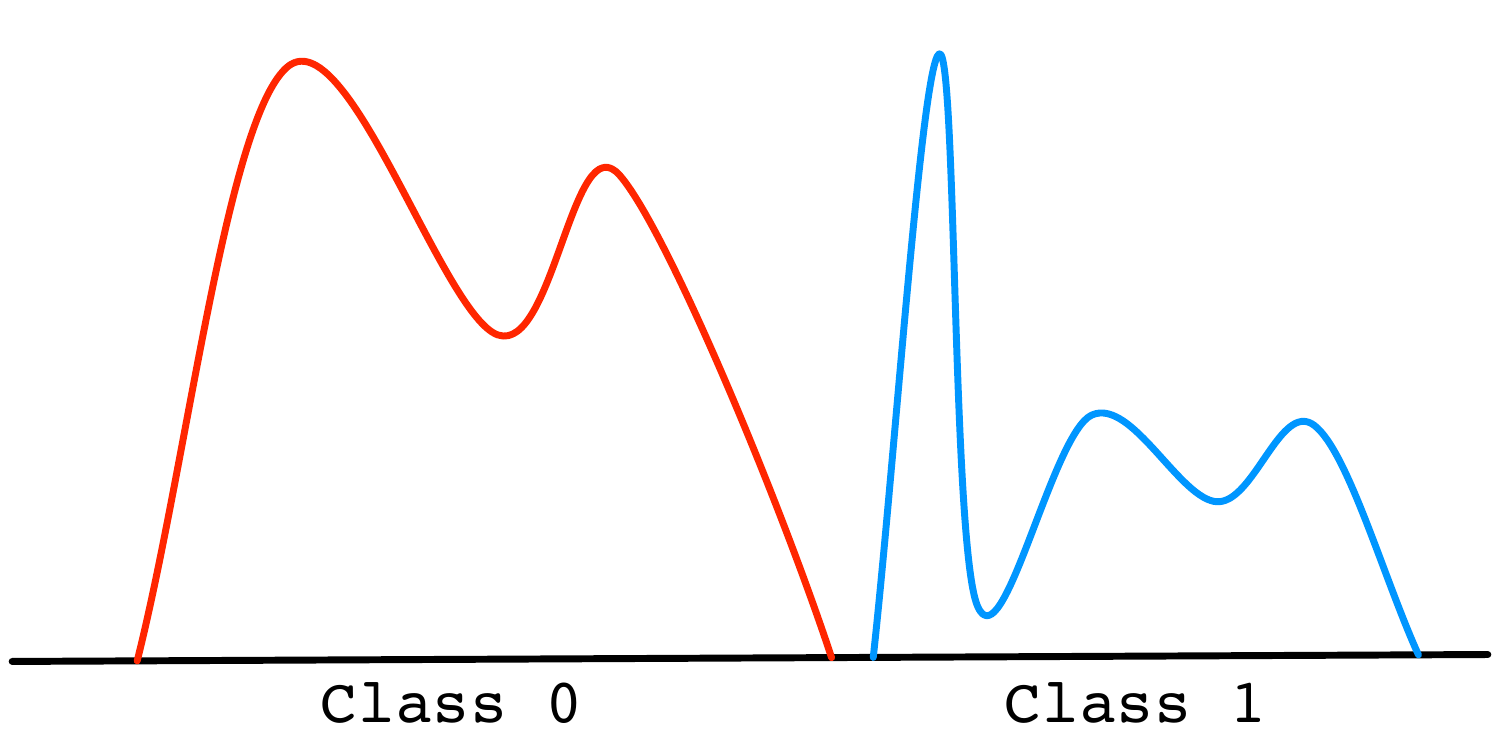}
\hskip.25in
\includegraphics[width=2.25in]{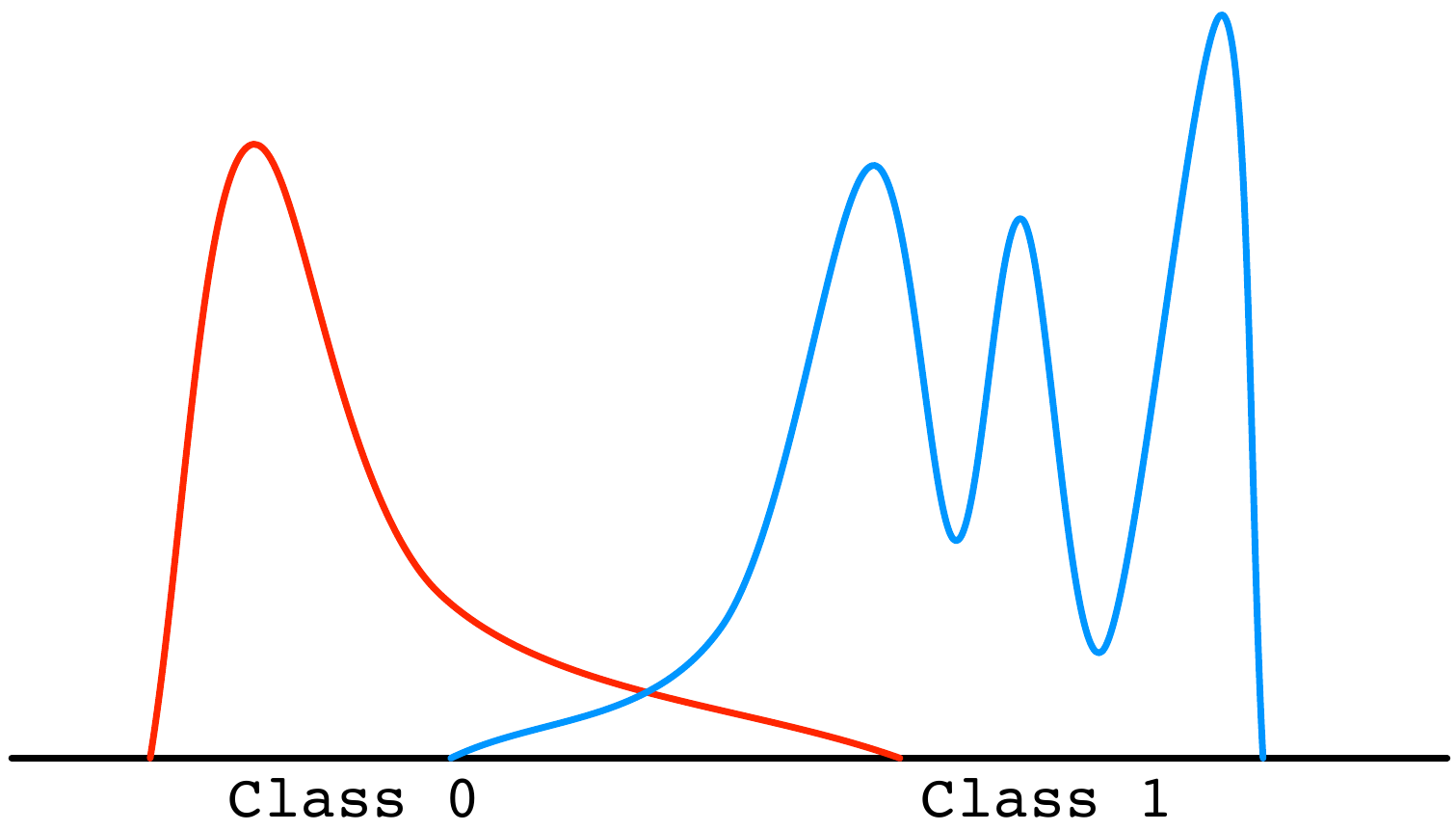}
\end{center}
\caption{One-dimensional distributions. In each case, the class-conditional densities are shown.}
\label{fig:1d-1}
\end{figure}

It should already be clear that the central objects in analyzing $k$-NN are balls of probability mass $\approx k/n$, specifically those near the decision boundary. Now let's see a variant of the previous example (Figure~\ref{fig:1d-1}, right) in which it is no longer the case that $\eta \in \{0,1\}$. Although one of the class-conditional densities in the figure is highly non-smooth, this erratic behavior occurs far from the decision boundary and thus does not affect nearest neighbor performance. And in the vicinity of the boundary, what matters is not how much $\eta$ varies within intervals of any given radius $r$, but rather within intervals of probability mass $k/n$.

These examples hopefully clarify that rates of convergence based only on Holder-continuity of $\eta$---or similar notions---are inadequate for properly characterizing the statistical behavior of nearest neighbor classifiers.

\subsection{Results of this paper}

Let us return to our earlier setting of pairs $(X,Y)$, where $X$ takes values in a metric space $(\X, \rho)$ and has distribution $\mu$, while $Y \in \{0,1\}$ has conditional probability function $\eta(x) = \Pr(Y = 1| X = x)$. We obtain rates of convergence for $k$-NN by attempting to make precise the intuitions discussed above. This leads to a somewhat different style of analysis than has been used in earlier work.

For any positive integers $k \leq n$, we define a notion of {\it effective boundary} for $k$-NN under sample size $n$. For the moment, denote this set by $A_{n,k} \subset \X$.
\begin{itemize}
\item We show that with high probability over the training data, the misclassification rate of the $k$-NN classifier (with respect to the Bayes-optimal classifer) is bounded above by $\mu(A_{n,k})$ plus a small additional term that can be made arbitrarily small.
\item We identify a general condition under which, as $n$ and $k$ grow, $A_{n,k}$ approaches the actual decision boundary $\{x \ |\ \eta(x) = 1/2\}$. This yields universal consistency in a broader range of metric spaces than just $\R^d$.
\item We give a lower bound on the error probability using a different notion of effective boundary.
\item We introduce a Holder-like smoothness condition that is tailored to nearest neighbor. We compare our upper and lower bounds under this kind of smoothness.
\item We obtain risk bounds under the margin condition of Tsybakov that match the best known results for nonparametric classification.
\item We look at additional specific cases of interest: when $\eta$ is bounded away from $1/2$, and the even more extreme scenario where $\eta \in \{0,1\}$ (zero Bayes risk).
\end{itemize}

\section{Definitions and results}

Let $(\calX, \rho)$ denote a separable metric space, and $\mu$ a Borel regular probability measure on this space (that is, open sets are measurable, and every set is contained in a Borel set of the same measure) from which {\it instances} $X$ are drawn. The {\it label} of an instance $X = x$ is $Y \in \{0,1\}$ and is distributed according to the conditional probability function $\eta: \calX \rightarrow [0,1]$ as follows: $\pr(Y = 1 | X = x) = \eta(x)$.

Given a {\it training set} $S = ((X_1, Y_1), \ldots, (X_n, Y_n))$ and a {\it query point} $x \in \calX$, we use the notation $\nb{X}{i}{x}$ to denote the $i$-th nearest neighbor of $x$ in the training set, and $\nb{Y}{i}{x}$ to denote its label. Distances are calculated with respect to the given metric $\rho$, and ties are broken by preferring points earlier in the sequence. The $k$-NN classifier is defined by
$$ g_{n,k}(x) 
\ = \ 
\left\{
\begin{array}{ll}
1 & \mbox{if $\nb{Y}{1}{x} + \cdots + \nb{Y}{k}{x} \geq k/2$} \\
0 & \mbox{otherwise}
\end{array}
\right.
$$
We analyze the performance of $g_{n,k}$ by comparing it with $g(x) = 1(\eta(x) \geq 1/2)$, the omniscent Bayes-optimal classifier. Specifically, we obtain bounds on $\Pr_X(g_{n,k}(X) \neq g(X))$ that hold with high probability over the choice of training data $S$.

\subsection{Definitions}

We begin with some definitions and notation. 

\paragraph{The radius and probability-radius of a ball.} For any $x \in \calX$, let 
$$B^o(x,r) = \{x' \in \calX\ |\ \rho(x,x') < r\} \mbox{\ \ and \ \ } B(x, r) = \{x' \in \calX\ |\  \rho(x,x') \leq r\}$$ 
denote the open and closed balls, respectively, of radius $r$ centered at $x$. We will mostly be dealing with balls that contain a prescribed probability mass. To this end, for any $x \in \calX$ and any $0 \leq p \leq 1$, define
$$ r_p(x) = \inf\{ r\  |\  \mu(B(x, r)) \geq p \} .$$
Thus $\mu(B(x, r_p(x))) \geq p$ (Lemma~\ref{lemma:probability-radius}), and $r_p(x)$ is the smallest radius for which this holds.

\paragraph{The support of $\mu$.} The support of distribution $\mu$ is defined as
$$ \supp(\mu) = \{x \in \calX\ |\ \mu(B(x,r)) > 0 \mbox{\ \ for all $r > 0$}\}.$$
It was shown by \citet{CH67} that in separable metric spaces, $\mu(\supp(\mu)) = 1$. For the interested reader, we reproduce their brief proof in the appendix (Lemma~\ref{lemma:support}).

\paragraph{The conditional probability function for a set.} The conditional probability function $\eta$ is defined for points $x \in \calX$, and can be extended to measurable sets $A \subset \calX$ with $\mu(A) > 0$ as follows:
\begin{equation}
\eta(A) = \frac{1}{\mu(A)} \int_A \eta \, d\mu .
\label{eq:eta-for-sets}
\end{equation}
This is the probability that $Y = 1$ for a point $X$ chosen at random from the distribution $\mu$ restricted to set $A$. We exclusively consider sets $A$ of the form $B(x,r)$, in which case $\eta$ is defined whenever $x \in \supp(\mu)$.

\paragraph{The effective interiors of the two classes, and the effective boundary.} When asked to make a prediction at point $x$, the $k$-NN classifier finds the $k$ nearest neighbors, which can be expected to lie in $B(x,r_p(x))$ for $p \approx k/n$. It then takes an average over these $k$ labels, which has a standard deviation of $\Delta \approx 1/\sqrt{k}$. With this in mind, there is a natural definition for the {\it effective interior} of the $Y = 1$ region: the points $x$ with $\eta(x) > 1/2$ on which the $k$-NN classifier is likely to be correct:
$$ 
\calX_{p,\Delta}^+  
= \{x \in \supp(\mu)\ |\  \eta(x) > \frac{1}{2}, \; \eta(B(x,r)) \geq \frac{1}{2} + \Delta \mbox{\ \ for all $r \leq r_p(x)$}\}.
$$
The corresponding definition for the $Y = 0$ region is
$$ 
\calX_{p,\Delta}^-  
= \{x \in \supp(\mu)\ |\ \eta(x) < \frac{1}{2}, \; \eta(B(x,r)) \leq \frac{1}{2} - \Delta \mbox{\ \ for all $r \leq r_p(x)$}\}.
$$
The remainder of $\calX$ is the effective boundary,
$$ \partial_{p,\Delta} = \calX \setminus (\calX_{p,\Delta}^+ \cup \calX_{p,\Delta}^- ).$$
Observe that $\partial_{p',\Delta'} \subset \partial_{p,\Delta}$ whenever $p' \leq p$ and $\Delta' \leq \Delta$. Under mild conditions, as $p$ and $\Delta$ tend to zero, the effective boundary tends to the actual decision boundary $\{x\ |\ \eta(x) = 1/2\}$ (Lemma~\ref{lemma:effective-boundary-limit}), which we shall denote $\partial_o$.

\subsection{A general bound on the misclassification error}

We begin with a general upper bound on the misclassification rate of the $k$-NN classifier. We will later specialize it to various situations of interest. All proofs appear in the appendix.
\begin{Thm}
Pick any $0 < \delta < 1$ and positive integers $k < n$. Let $g_{n,k}$ denote the $k$-NN classifier based on $n$ training points, and $g(x)$ the Bayes-optimal classifier. With probability at least $1-\delta$ over the choice of training data, 
$$ \pr_X(g_{n,k}(X) \neq g(X)) \ \leq \ \delta + \mu \big(\partial_{p, \Delta}\big) ,$$
where
$$ 
p = \frac{k}{n} \cdot \frac{1}{1 - \sqrt{(4/k) \ln (2/\delta)}}, \mbox{\ \ \ and \ \ \ }
\Delta = \min\left(\frac{1}{2}, \sqrt{\frac{1}{k} \ln \frac{2}{\delta}} \right).
$$
\label{thm:general}
\end{Thm}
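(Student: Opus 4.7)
The plan is to first derive a pointwise failure bound at each $x$ outside $\partial_{p,\Delta}$, and then lift it to a uniform high-probability statement over training sets via Fubini and Markov's inequality.

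For the pointwise step, fix any $x \in \calX_{p,\Delta}^+$ (the $\calX_{p,\Delta}^-$ case is symmetric). Let $R_k(x) = \rho(x, \nb{X}{k}{x})$ and $S_k(x) = \sum_{j=1}^k \nb{Y}{j}{x}$. Since $g(x) = 1$, the goal is $\pr_S[S_k < k/2] \leq \delta^2/2$, which I obtain by splitting
$$ \pr_S[S_k < k/2] \ \leq \ \pr_S[R_k > r_p(x)] \ + \ \pr_S[S_k < k/2,\, R_k \leq r_p(x)] $$
and bounding each piece by $(\delta/2)^2$. The first piece is the probability that fewer than $k$ training points fall in $B(x, r_p(x))$; since $\mu(B(x, r_p(x))) \geq p$ (by the lemma on probability radii), this count stochastically dominates $\mathrm{Bin}(n, p)$, and the prescribed $p$ together with $\gamma = \sqrt{(4/k)\ln(2/\delta)}$ satisfy $np(1-\gamma) = k$, so multiplicative Chernoff delivers $\exp(-np\gamma^2/2) \leq (\delta/2)^2$, using $np \geq k$. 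For the second piece I condition on $R_k = r$ with $r \leq r_p(x)$: the $k-1$ neighbors strictly inside $B^o(x, r)$ are i.i.d.\ pairs from the conditional distribution of $(X,Y)$ given $X \in B^o(x,r)$, so marginally their labels are i.i.d.\ $\mathrm{Bern}(\eta(B^o(x, r)))$, whose mean exceeds $1/2 + \Delta$ by the effective-interior hypothesis applied at radius $r \leq r_p(x)$. Since $\{S_k < k/2\}$ entails $\{\sum_{j=1}^{k-1} \nb{Y}{j}{x} < k/2\}$, Hoeffding on the resulting Binomial gives (up to lower-order $1/k$ slack in the exponent) $\exp(-2k\Delta^2) = (\delta/2)^2$.

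To convert the per-$x$ bound into a uniform guarantee, set $F(S) = \pr_X[g_{n,k}(X) \neq g(X),\, X \notin \partial_{p,\Delta}]$. By Fubini,
$$ \bbE_S F(S) \ = \ \bbE_X\!\left[\pr_S[g_{n,k}(X) \neq g(X)]\,\mathbf{1}[X \notin \partial_{p,\Delta}]\right] \ \leq \ \delta^2/2. $$
Markov's inequality then yields $\pr_S[F(S) > \delta] \leq \delta/2 \leq \delta$. Because $\pr_X[g_{n,k}(X) \neq g(X)] \leq F(S) + \mu(\partial_{p,\Delta})$, the conclusion follows.

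The main obstacle will be the measure-theoretic handling of the conditioning on $R_k = r$: for absolutely continuous $\mu$ this is a null event, and ties on the sphere $\partial B(x, r)$ need care. The standard resolution is to decompose the training sample into counts inside $B^o(x, r)$, on the sphere, and outside, and exploit the conditional i.i.d.\ structure within each piece; the $k$-th neighbor's label, dropped in the Hoeffding step, only affects constants in the exponent.
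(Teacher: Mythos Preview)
Your overall strategy matches the paper's proof exactly: a pointwise failure bound at each $x \notin \partial_{p,\Delta}$ via a Chernoff-plus-Hoeffding split, then Fubini and Markov to lift to a high-probability statement over training sets. The difference is entirely in how the Hoeffding step is made rigorous.

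You anchor on the $k$-th nearest neighbor distance $R_k$, argue that conditionally on $R_k=r$ the first $k-1$ neighbors are i.i.d.\ from $\mu$ restricted to $B^o(x,r)$, and drop the $k$-th label. As you yourself flag, this leaves an obstacle: ties on the sphere $\partial B(x,r)$ spoil the clean conditional i.i.d.\ structure (when $\mu$ puts mass on the sphere there may be fewer than $k-1$ points strictly inside), and the loss of one label means the stated constant $\Delta=\sqrt{(1/k)\ln(2/\delta)}$ is not recovered on the nose. The paper closes both issues with a single device: augment each $X_i$ with an independent uniform tie-breaker $Z_i\in[0,1]$, and anchor on the $(k{+}1)$st nearest neighbor in this augmented ordering. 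Conditionally on $(X_{(k+1)},Z_{(k+1)})$, the $k$ nearest neighbors are then \emph{exactly} i.i.d.\ from $\mu\times\nu$ restricted to an ``augmented ball'' $B'$, with ties occurring with probability zero; $\eta(B')$ is shown to be a convex combination of $\eta(B(x,r))$ and $\eta(B^o(x,r))$, both $\geq 1/2+\Delta$ for $r\leq r_p(x)$ (the open-ball case via a short continuity lemma that your argument also implicitly needs, since the definition of $\calX_{p,\Delta}^+$ speaks only of closed balls). Hoeffding on $k$ Bernoullis then gives $2e^{-2k\Delta^2}$ and the theorem's constants exactly. So your plan is sound; the augmented-space tie-breaking together with the shift from $R_k$ to $R_{k+1}$ as the anchor is precisely the missing ingredient that cleanly fills the gap you identify.
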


Convergence results for nearest neighbor have traditionally studied the excess risk $R_{n,k} - R^*$, 
where $R_{n,k} = \pr(Y \neq g_{n,k}(X))$. If we define the pointwise quantities
\begin{align*}
R_{n,k}(x) &= \pr(Y \neq g_{n,k}(x) | X = x) \\
R^*(x) &= \min(\eta(x), 1- \eta(x)),
\end{align*}
for all $x \in \calX$, we see that
\begin{equation}
R_{n,k}(x) - R^*(x) \ = \  |1 - 2 \eta(x)| 1(g_{n,k}(x) \neq g(x)) .\label{eq:risk}
\end{equation}
Taking expectation over $X$, we then have $R_{n,k} - R^* \leq \pr_X(g_{n,k}(X) \neq g(X))$, and so Theorem~\ref{thm:general} is also an upper bound on the excess risk.

To obtain an asymptotic result, we can take a sequence of integers $(k_n)$ and reals $(\delta_n)$ for which the corresponding $p_n, \Delta_n \downarrow 0$. As we will see, this implies that $\partial_{p_n,\Delta_n}$ converges to the decision boundary, $\partial_o$.

\subsection{Universal consistency}

A series of results, starting with \cite{S77}, has shown that $k_n$-NN is strongly consistent ($R_n = R_{n,k_n} \rightarrow R^*$ almost surely) when $\calX$ is a finite-dimensional Euclidean space and $\mu$ is a Borel measure. A consequence of Theorem~\ref{thm:general} is that this phenomenon holds quite a bit more generally. In fact, strong consistency holds in any metric measure space $(\calX, \rho, \mu)$ for which the Lebesgue differentiation theorem is true: that is, spaces in which, for any bounded measurable $f$, 
\begin{equation}
\lim_{r \downarrow 0} \frac{1}{\mu(B(x,r))} \int_{B(x,r)} f \, d\mu = f(x) 
\label{eq:differentiation}
\end{equation}
for almost all ($\mu$-a.e.) $x \in \calX$.

For more details on this differentiation property, see \cite[2.9.8]{F69} and \cite[1.13]{H01}. It holds, for instance:
\begin{itemize}
\item When $(\X, \rho)$ is a finite-dimensional normed space~\cite[1.15(a)]{H01}.
\item When $(\X, \rho, \mu)$ is {\it doubling}~\cite[1.8]{H01}, that is, when there exists a constant $C(\mu)$ such that $\mu(B(x,2r)) \leq C(\mu) \mu(B(x,r))$ for every ball $B(x,r)$.
\item When $\mu$ is an atomic measure on $\X$.
\end{itemize}

\begin{Thm}
Suppose metric measure space $(\calX, \rho, \mu)$ satisfies differentiation condition (\ref{eq:differentiation}). Pick a sequence of positive integers $(k_n)$, and for each $n$, let $R_n = R_{n,k_n}$ be the risk of the $k_n$-NN classifier $g_{n,k_n}$.
\begin{enumerate}
\item If $k_n \rightarrow \infty$ and $k_n/n \rightarrow 0$, then for all $\epsilon > 0$, 
$$ \lim_{n \rightarrow \infty} \pr_n(R_n - R^* > \epsilon) = 0 .$$ 
Here $\pr_n$ denotes probability over the training set $(X_1, Y_1), \ldots, (X_n, Y_n)$.
\item If in addition $k_n/(\log n) \rightarrow \infty$, then $R_n \rightarrow R^*$ almost surely.
\end{enumerate}
\label{thm:consistency}
\end{Thm}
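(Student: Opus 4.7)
The plan is to derive both parts from Theorem~\ref{thm:general} by (a) choosing the confidence parameter $\delta_n$ appropriately, and (b) showing, via the differentiation property~(\ref{eq:differentiation}), that the effective boundary $\partial_{p_n,\Delta_n}$ contracts (in the sense that matters for excess risk) onto the true decision boundary $\partial_o = \{\eta = 1/2\}$, on which the excess-risk contribution vanishes via~(\ref{eq:risk}). For part~1, I would fix any constant $\delta > 0$; for part~2, I would take $\delta_n = n^{-2}$, which is summable. Reading off Theorem~\ref{thm:general}, $\Delta_n \asymp \sqrt{\ln(1/\delta_n)/k_n}$ and $p_n$ is of order $k_n/n$; under part~1's hypothesis both tend to $0$ with $\delta$ fixed, while the stronger condition $k_n/\log n \to \infty$ of part~2 is exactly what is needed for $\Delta_n \to 0$ when $\delta_n = n^{-2}$.

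The heart of the argument is a pointwise shrinkage claim: for $\mu$-a.e.\ $x$ with $\eta(x) \neq 1/2$, $x \notin \partial_{p_n,\Delta_n}$ for all sufficiently large $n$. Take such a point $x \in \supp(\mu)$ that is also a Lebesgue differentiation point of $\eta$ (these form a set of full $\mu$-measure by~(\ref{eq:differentiation})), say with $\eta(x) > 1/2$, and set $c := \eta(x) - 1/2 > 0$. Because $\mu(B(x,s)) > 0$ for every $s > 0$, the probability-radius $r_{p_n}(x)$ shrinks to $0$ as $p_n \downarrow 0$; and by differentiation $\eta(B(x,r)) \to \eta(x)$ as $r \downarrow 0$, so some $r_0$ exists with $\eta(B(x,r)) \geq 1/2 + c/2$ for every $r \leq r_0$. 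Once $n$ is large enough that $\Delta_n < c/2$ and $r_{p_n}(x) \leq r_0$, the defining condition for $x \in \calX^{+}_{p_n,\Delta_n}$ is met, so $x \notin \partial_{p_n,\Delta_n}$. The case $\eta(x) < 1/2$ is symmetric.

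To conclude, I would extract from the proof of Theorem~\ref{thm:general} the slightly stronger statement that, on its $(1-\delta_n)$-probability event, the disagreement set $\{x : g_{n,k_n}(x) \neq g(x)\}$ is, up to $\mu$-slack $\delta_n$, contained in $\partial_{p_n,\Delta_n}$. Combined with identity~(\ref{eq:risk}) and the fact that $|1-2\eta| = 0$ on $\partial_o$, this yields
\[
R_n - R^* \;\leq\; \delta_n \;+\; \int_{\partial_{p_n,\Delta_n}} |1 - 2\eta(x)|\, d\mu(x).
\]
The integrand $|1-2\eta|\,\mathbf{1}_{\partial_{p_n,\Delta_n}}$ is bounded by $1$ and tends to $0$ pointwise $\mu$-a.e.\ (trivially on $\{\eta = 1/2\}$, by the shrinkage step elsewhere), so dominated convergence finishes part~1. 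For part~2, summability of $\{\delta_n\}$ and Borel--Cantelli upgrade convergence in probability to almost-sure convergence.

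\textbf{Main obstacle.} The crux is the shrinkage step, which translates the \emph{distance-scale} differentiation property~(\ref{eq:differentiation}) into the \emph{probability-scale} statement governing membership in $\partial_{p_n,\Delta_n}$. The bridge is the observation that $r_{p_n}(x) \downarrow 0$ for every $x \in \supp(\mu)$, so the small-$r$ differentiation limit can be invoked at scale $r = r_{p_n}(x)$. A secondary subtlety is the possibility that $\mu(\{\eta = 1/2\}) > 0$, which is handled by integrating against $|1 - 2\eta|$ rather than the crude bound $\mu(\partial_{p_n,\Delta_n})$.
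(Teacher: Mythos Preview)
Your proposal is correct and follows essentially the same route as the paper: apply Theorem~\ref{thm:general} with a suitably chosen $\delta_n$, use the differentiation property to show that $\mu$-a.e.\ point off $\partial_o$ eventually leaves $\partial_{p_n,\Delta_n}$, pass to the limit (you use dominated convergence; the paper uses the equivalent continuity-from-above on $\partial_{p_n,\Delta_n}\setminus\partial_o$), and finish part~2 with Borel--Cantelli and $\delta_n=n^{-2}$. The only cosmetic differences are that the paper takes $\delta_n=e^{-\sqrt{k_n}}$ in part~1 rather than a fixed $\delta$, and handles the set $\{\eta=1/2\}$ by subtracting $\partial_o$ (its Corollary~\ref{cor:general}) rather than weighting by $|1-2\eta|$ as you do.
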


\subsection{A lower bound}

Next, we give a counterpart to Theorem~\ref{thm:general} that lower-bounds the expected probability of error of $g_{n,k}$. For any positive integers $k < n$, define the high-error set $\calE_{n,k} = \calE_{n,k}^+ \cup \calE_{n,k}^-$, where
\begin{align*}
\calE_{n,k}^+ 
&= 
\left\{x \in \supp(\mu) \ |\  \eta(x) > \frac{1}{2},\ \  \eta(B(x,r)) \leq \frac{1}{2} + \frac{1}{\sqrt{k}}
\mbox{\ for all\ } r_{k/n}(x) \leq r \leq r_{(k+\sqrt{k}+1)/n}(x) \right\} \\
\calE_{n,k}^- 
&= 
\left\{x \in \supp(\mu) \ |\ \eta(x) < \frac{1}{2},\ \  \eta(B(x,r)) \geq \frac{1}{2} - \frac{1}{\sqrt{k}}
\mbox{\ for all\ } r_{k/n}(x) \leq r \leq r_{(k+\sqrt{k}+1)/n}(x) \right\} .
\end{align*}
(Recall the definition (\ref{eq:eta-for-sets}) of $\eta(A)$ for sets $A$.) We will see that for smooth $\eta$ this region is comparable to the effective decision boundary $\partial_{k/n, 1/\sqrt{k}}$. Meanwhile, here is a lower bound that applies to any $(\calX, \rho, \mu)$.
\begin{Thm}
For any positive integers $k < n$, let $g_{n,k}$ denote the $k$-NN classifier based on $n$ training points. There is an absolute constant $c_o$ such that the expected misclassification rate satisfies
$$ \E_n \pr_X(g_{n,k}(X) \neq g(X)) \geq c_o \, \mu(\calE_{n,k}) ,$$
where $\E_n$ is expectation over the choice of training set.
\label{thm:lower-bound}
\end{Thm}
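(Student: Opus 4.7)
The plan is to establish a pointwise lower bound $\pr_n(g_{n,k}(x) \neq g(x)) \geq c_o$ for every $x \in \calE_{n,k}$, and then to integrate:
\[
\E_n \pr_X\bigl(g_{n,k}(X) \neq g(X)\bigr) \;=\; \int \pr_n\bigl(g_{n,k}(x) \neq g(x)\bigr)\, d\mu(x) \;\geq\; c_o\, \mu(\calE_{n,k}).
\]
The sets $\calE_{n,k}^+$ and $\calE_{n,k}^-$ are handled symmetrically, so I focus on $x \in \calE_{n,k}^+$, where $g(x) = 1$, and aim to show that with constant probability over the training data the $k$-NN vote at $x$ falls strictly below $k/2$.

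The key structural tool is to condition on the $(k{+}1)$-st nearest-neighbor distance $r^+ = \rho(x, \nb{X}{k+1}{x})$. A standard exchangeability / order-statistics argument gives that, conditional on $r^+ = r$, the first $k$ nearest neighbors of $x$ are i.i.d.\ from $\mu$ restricted to $B^o(x,r)$, so (after marginalizing over their positions) the labels $\nb{Y}{1}{x}, \dots, \nb{Y}{k}{x}$ are i.i.d.\ $\mathrm{Bernoulli}(p)$ with $p = \eta(B^o(x,r))$. Writing $r_1 = r_{k/n}(x)$ and $r_2 = r_{(k+\sqrt{k}+1)/n}(x)$, step one is the probability-radius estimate $\pr_n(r^+ \in [r_1, r_2]) \geq c_1$ for an absolute $c_1 > 0$. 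The events $\{r^+ \leq r_2\} = \{\mathrm{Bin}(n, \mu(B(x, r_2))) \geq k+1\}$ and $\{r^+ \geq r_1\} = \{\mathrm{Bin}(n, \mu(B^o(x, r_1))) \leq k\}$ each involve a binomial whose relevant threshold lies within order $\sqrt{k}$ of its mean (about one standard deviation), so each has constant probability by a non-asymptotic Gaussian approximation; the two can be combined using the independence of the sample counts in the disjoint regions $B^o(x,r_1)$ and $B(x,r_2) \setminus B^o(x,r_1)$, with a short case split on whether $\mu(B^o(x,r_1))$ is small (then $N_1$ is small and the annular count has large mean) or close to $k/n$ (then $N_1$ concentrates near $k$ and the annular count need only exceed a small constant above its mean).

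Step two is the anti-concentration bound on the Bernoulli sum. For $r \in (r_1, r_2]$, the definition of $\calE_{n,k}^+$ gives $\eta(B(x, r')) \leq 1/2 + 1/\sqrt{k}$ for all $r' \in [r_1, r]$, and passing to the limit $r' \uparrow r$ (which absorbs the possible sphere mass) yields $p = \eta(B^o(x, r)) \leq 1/2 + 1/\sqrt{k}$. I would then prove that for $S \sim \mathrm{Bin}(k, p)$ with $p \leq 1/2 + 1/\sqrt{k}$, $\pr(S < k/2) \geq c_2$ for an absolute $c_2 > 0$. When $p \leq 1/2$ this is essentially the median-vs-mean fact for binomials and gives $c_2$ close to $1/2$. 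When $p \in (1/2, 1/2 + 1/\sqrt{k}]$, we have $\E S = kp \leq k/2 + \sqrt{k}$ and $\Var S = kp(1-p) \geq k/4 - 1$, so $k/2$ lies within about $2\sigma_S$ below the mean, and a Berry--Esseen estimate delivers $\pr(S < k/2) \geq \Phi(-2) - O(1/\sqrt{k})$, a positive absolute constant for $k$ above a universal threshold.

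Combining the two steps gives $\pr_n(g_{n,k}(x) \neq g(x)) \geq c_1 c_2 =: c_o$ for every $x \in \calE_{n,k}^+$; the symmetric argument handles $\calE_{n,k}^-$, and integration finishes the proof. The main obstacle is the anti-concentration estimate in step two: in the worst case $p = 1/2 + 1/\sqrt{k}$, the threshold $k/2$ sits essentially two standard deviations below $\E S$, so the bound is inherently sharp and requires a non-asymptotic CLT (Berry--Esseen or a Slud-type bound) that is uniform in $p$ and $k$; a one-sided concentration inequality would only upper-bound the relevant deviation probability rather than lower-bound it.
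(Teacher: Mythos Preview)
Your proposal is correct and follows essentially the same two-step structure as the paper's proof: first show that the $(k{+}1)$-st nearest-neighbor distance lands in $[r_{k/n}(x), r_{(k+\sqrt{k}+1)/n}(x)]$ with constant probability, then show that conditionally the $k$-NN vote goes the wrong way with constant probability, and integrate. The paper's execution differs only in minor technicalities: for step one it uses a straightforward union bound (so your independence/case-split argument is unnecessary); it makes the ``conditional i.i.d.'' picture rigorous in the presence of sphere mass via an explicit augmented tie-breaking space rather than a limit argument; and for the anti-concentration it invokes Slud's finite-sample normal-approximation inequalities in place of Berry--Esseen.
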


\subsection{Smooth measures}

For the purposes of nearest neighbor, it makes sense to define a notion of smoothness with respect to the marginal distribution on instances. We use a variant of Holder-continuity: for $\alpha, L > 0$, we say the conditional probability function $\eta$ is {\it $(\alpha,L)$-smooth} in metric measure space $(\calX, \rho, \mu)$ if for all $x,x' \in \calX$,
$$ |\eta(x) - \eta(x')| \ \leq \ L \, \mu(B^o(x, \rho(x,x')))^\alpha .$$
This is stated to resemble standard smoothness conditions, but what we will really need is the weaker assertion that for all $x \in \supp(\mu)$ and all $r > 0$,
$$ |\eta(B(x,r)) - \eta(x)| \ \leq \ L \, \mu(B^o(x,r))^\alpha .$$

In such circumstances, the earlier upper and lower bounds on generalization error take on a more easily interpretable form. Recall that the key term in the upper bound (Theorem~\ref{thm:general}) is $\mu(\partial_{p,\Delta})$, for $p \approx k/n$ and $\Delta \approx 1/\sqrt{k}$.
\begin{Lem}
If $\eta$ is $(\alpha,L)$-smooth in $(\calX, \rho, \mu)$, then for any $p, \Delta \geq 0$,
$$ \partial_{p,\Delta} \cap \supp(\mu) \ \subset \ \bigg\{ x \in \calX\ \bigg|\ \big|\eta(x) - \frac{1}{2}\big| \leq \Delta + L p^\alpha \bigg\} .$$
\label{lemma:smooth-upper}
\end{Lem}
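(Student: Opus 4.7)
The plan is to prove the contrapositive: fix $x \in \supp(\mu)$ with $|\eta(x) - 1/2| > \Delta + Lp^\alpha$, and show that $x \notin \partial_{p,\Delta}$, i.e., that $x$ lies in $\calX_{p,\Delta}^+$ or $\calX_{p,\Delta}^-$. By symmetry it suffices to treat the case $\eta(x) > 1/2 + \Delta + Lp^\alpha$, and try to verify that every $r \leq r_p(x)$ satisfies $\eta(B(x,r)) \geq 1/2 + \Delta$.

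The main mechanism is the weaker form of $(\alpha,L)$-smoothness noted in the excerpt, namely $|\eta(B(x,r)) - \eta(x)| \leq L \mu(B^o(x,r))^\alpha$. Combining this with the target bound will require showing that $\mu(B^o(x,r)) \leq p$ whenever $r \leq r_p(x)$. For $r < r_p(x)$ this is immediate from the definition of $r_p(x)$ as an infimum, since then $\mu(B(x,r)) < p$ and $B^o(x,r) \subseteq B(x,r)$. The one slightly delicate case is $r = r_p(x)$: here $\mu(B(x,r_p(x)))$ itself may exceed $p$ (e.g., when $\mu$ has an atom on the sphere at radius $r_p(x)$), so we must instead observe that $B^o(x, r_p(x)) = \bigcup_{r' < r_p(x)} B(x, r')$ and apply monotone convergence (or continuity of measure from below) to get $\mu(B^o(x,r_p(x))) \leq p$.

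Once this monotonicity step is in hand, the proof finishes in one line: for any $r \leq r_p(x)$,
\[
\eta(B(x,r)) \;\geq\; \eta(x) - L\,\mu(B^o(x,r))^\alpha \;\geq\; \eta(x) - L p^\alpha \;>\; \tfrac{1}{2} + \Delta,
\]
which together with $\eta(x) > 1/2$ places $x$ in $\calX_{p,\Delta}^+$, contradicting $x \in \partial_{p,\Delta}$. The symmetric chain of inequalities handles $\eta(x) < 1/2 - \Delta - Lp^\alpha$, showing $x \in \calX_{p,\Delta}^-$.

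I do not expect a real obstacle here; the only step worth writing carefully is the open-versus-closed ball accounting at $r = r_p(x)$, since the smoothness hypothesis is phrased in terms of $\mu(B^o(\cdot,\cdot))$ while the effective-boundary definition uses closed balls $B(x,r)$ for $r \leq r_p(x)$. Everything else is bookkeeping.
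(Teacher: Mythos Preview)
Your proposal is correct and follows essentially the same contrapositive argument as the paper: for $x\in\supp(\mu)$ with $\eta(x)>\tfrac12+\Delta+Lp^\alpha$, one uses $\mu(B^o(x,r))\le p$ for all $r\le r_p(x)$ together with the smoothness bound to place $x$ in $\calX_{p,\Delta}^+$. The paper simply asserts $\mu(B^o(x,r))\le p$ for $r\le r_p(x)$ without comment, whereas you spell out the boundary case $r=r_p(x)$ via continuity from below; this extra care is warranted but does not change the approach.
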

This yields a bound on $\pr_X(g_{n,k}(X) \neq g(X))$ that is roughly of the form
$\mu(\{ x \ | \ |\eta(x) - 1/2| \leq k^{-1/2} + L (k/n)^\alpha)$.
The optimal setting of $k$ is then $\sim n^{2\alpha/(2\alpha+1)}$.

The key term in the lower bound of Theorem~\ref{thm:lower-bound} is $\mu(\calE_{n,k})$. Under the smoothness condition, it becomes directly comparable to the upper bound.
\begin{Lem}
If $\eta$ is $(\alpha,L)$-smooth in $(\calX, \rho, \mu)$, then for any $k,n$,
$$ \calE_{n,k} \supset \left\{ x \in \supp(\mu) \ \bigg|\ \eta(x) \neq \frac{1}{2}, \ |\eta(x) - \frac{1}{2}| \leq \frac{1}{\sqrt{k}} - L \left( \frac{k+\sqrt{k}+1}{n} \right)^\alpha \right\} .$$
\label{lemma:smooth-lower}
\end{Lem}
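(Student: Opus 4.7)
The strategy is a direct set-containment argument: take any $x$ in the set on the right-hand side and show $x \in \calE_{n,k}^+ \cup \calE_{n,k}^-$. By the $\eta \leftrightarrow 1-\eta$ symmetry between $\calE_{n,k}^+$ and $\calE_{n,k}^-$, I would restrict attention to the case $\eta(x) > 1/2$ and argue $x \in \calE_{n,k}^+$; the case $\eta(x) < 1/2$ is handled identically.

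\paragraph{Key steps.} First, use the weak form of the smoothness hypothesis stated immediately after the definition of $(\alpha,L)$-smoothness, namely $|\eta(B(x,r)) - \eta(x)| \leq L\,\mu(B^o(x,r))^\alpha$, to obtain
$$\eta(B(x,r)) \;\leq\; \eta(x) + L\,\mu(B^o(x,r))^\alpha.$$
Second, control $\mu(B^o(x,r))$ uniformly on the range $r_{k/n}(x) \leq r \leq r_{(k+\sqrt{k}+1)/n}(x)$ using the elementary fact that if $r \leq r_p(x)$, then $\mu(B^o(x,r)) \leq p$. This fact holds because $B^o(x,r) = \bigcup_{r'<r} B(x,r')$, so by monotone convergence $\mu(B^o(x,r)) = \lim_{r'\uparrow r}\mu(B(x,r'))$, and the definition of $r_p(x)$ as an infimum forces $\mu(B(x,r')) < p$ for every $r' < r \leq r_p(x)$. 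Applying this with $p = (k+\sqrt{k}+1)/n$ gives, throughout the relevant range of $r$,
$$\eta(B(x,r)) \;\leq\; \eta(x) + L\left(\frac{k+\sqrt{k}+1}{n}\right)^\alpha.$$

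Third, plug in the hypothesis on $x$, namely $\eta(x) \leq 1/2 + 1/\sqrt{k} - L\bigl((k+\sqrt{k}+1)/n\bigr)^\alpha$, to conclude $\eta(B(x,r)) \leq 1/2 + 1/\sqrt{k}$ for all $r$ with $r_{k/n}(x) \leq r \leq r_{(k+\sqrt{k}+1)/n}(x)$. Combined with $\eta(x) > 1/2$ and $x \in \supp(\mu)$, this is exactly the defining condition for membership in $\calE_{n,k}^+$.

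\paragraph{Main obstacle.} The proof is not deep; the only subtlety is the interplay between open and closed balls. The set $\calE_{n,k}^+$ is defined in terms of closed-ball averages $\eta(B(x,r))$, while the smoothness condition and the naive bound from the infimum definition of $r_p$ both involve open balls. The place I expect readers to pause is the step justifying $\mu(B^o(x,r)) \leq p$ when $r \leq r_p(x)$, since the closed-ball mass $\mu(B(x, r_p(x)))$ can be strictly larger than $p$; the argument must therefore pass through the supremum over $r' < r$ rather than evaluate at $r$ itself.
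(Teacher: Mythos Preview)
Your proof is correct and follows essentially the same route as the paper: apply the weak smoothness bound $|\eta(B(x,r))-\eta(x)|\le L\,\mu(B^o(x,r))^\alpha$, use $\mu(B^o(x,r))\le p$ for $r\le r_p(x)$ with $p=(k+\sqrt{k}+1)/n$, and then plug in the hypothesis on $\eta(x)$ to verify the defining inequality of $\calE_{n,k}^+$ (with the $\calE_{n,k}^-$ case by symmetry). Your explicit justification of $\mu(B^o(x,r))\le p$ via $B^o(x,r)=\bigcup_{r'<r}B(x,r')$ is in fact more careful than the paper, which asserts this step without comment.
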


It is common to analyze nonparametric classifiers under the assumption that $\X = \R^d$ and that $\eta$ is {\it $\alpha_H$-Holder continuous} for some $\alpha > 0$, that is,
$$ |\eta(x) - \eta(x')| \leq L \|x- x'\|^{\alpha_H}$$
for some constant $L$. These bounds typically also require $\mu$ to have a density that is uniformly bounded (above and/or below). We now relate these assumptions to our notion of smoothness.
\begin{Lem}
Suppose that $\X \subset \R^d$, and $\eta$ is $\alpha_H$-Holder continuous, and $\mu$ has a density with respect to Lebesgue measure that is $\geq \mu_{\rm \scriptsize min}$ on $\X$. Then there is a constant $L$ such that for any $x \in \mbox{supp}(\mu)$ and $r > 0$ with $B(x,r) \subset \X$,
$$|\eta(x) - \eta(B(x,r))| \leq L \mu(B^o(x,r))^{\alpha_H/d} .$$
\label{lemma:relation-to-Holder}
\end{Lem}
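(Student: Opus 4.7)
The plan is to prove the two inequalities in sequence and then combine them. First I would use $\alpha_H$-Holder continuity of $\eta$ to show that $|\eta(x) - \eta(B(x,r))| \leq L_H r^{\alpha_H}$, where $L_H$ is the Holder constant. Indeed, for any $x' \in B(x,r)$ we have $|\eta(x) - \eta(x')| \leq L_H \|x - x'\|^{\alpha_H} \leq L_H r^{\alpha_H}$, and averaging over $x'$ with respect to $\mu$ restricted to $B(x,r)$ (which is well-defined since $x \in \supp(\mu)$ and $B(x,r) \subset \X$) preserves the bound by Jensen's inequality.

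Next I would convert the radius $r$ into a probability mass using the density lower bound. Since $B(x,r) \subset \X$, the density assumption $d\mu/d\lambda \geq \mu_{\min}$ on $\X$ gives
\[
\mu(B(x,r)) \ \geq \ \mu_{\min} \cdot \lambda(B(x,r)) \ = \ \mu_{\min} \, c_d \, r^d ,
\]
where $c_d$ is the Lebesgue volume of the unit ball in $\R^d$. Because the boundary sphere $\{x' : \|x-x'\| = r\}$ has Lebesgue measure zero and $\mu$ is absolutely continuous with respect to Lebesgue measure, $\mu(B^o(x,r)) = \mu(B(x,r))$. Solving for $r$ yields $r \leq \bigl(\mu(B^o(x,r)) / (\mu_{\min} c_d)\bigr)^{1/d}$.

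Combining the two displays gives
\[
|\eta(x) - \eta(B(x,r))| \ \leq \ L_H \, r^{\alpha_H} \ \leq \ \frac{L_H}{(\mu_{\min} c_d)^{\alpha_H/d}} \, \mu(B^o(x,r))^{\alpha_H/d} ,
\]
so the claim holds with $L = L_H / (\mu_{\min} c_d)^{\alpha_H/d}$. The proof is essentially a chain of inequalities and there is no deep obstacle; the one point that requires a little care is the identification of $\mu(B^o(x,r))$ with $\mu(B(x,r))$, which relies on the absolute continuity of $\mu$, and the requirement $B(x,r) \subset \X$ so that the uniform density lower bound is valid on the entire ball rather than just a portion of it.
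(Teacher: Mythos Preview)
Your proof is correct and follows essentially the same two-step approach as the paper: first bound $|\eta(x)-\eta(B(x,r))|$ by $L_H r^{\alpha_H}$ via H\"older continuity, then convert $r$ to probability mass using $\mu(B^o(x,r)) \geq \mu_{\min} v_d r^d$ from the density lower bound. The paper writes the open-ball mass bound directly (since the Lebesgue volume of $B^o(x,r)$ already equals $v_d r^d$), whereas you route through the closed ball and then invoke absolute continuity to equate $\mu(B^o)$ and $\mu(B)$; this extra step is harmless but not needed.
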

(To remove the requirement that $B(x,r) \subset \X$, we would need the boundary of $\X$ to be well-behaved, for instance by requiring that $\X$ contains a constant fraction of every ball centered in it. This is a familiar assumption in results on nonparametric classification, including the seminal work of \cite{AT07} that we discuss in the next section.)

Our smoothness condition for nearest neighbor problems can thus be seen as a generalization of the usual Holder conditions. It applies in broader range of settings, for example for discrete $\mu$.

\subsection{Margin bounds}
\label{sec:tsybakov}

An achievement of statistical theory in the past two decades has been {\em{margin bounds}}, which give fast rates of convergence for many classifiers when the underlying data distribution $\P$ (given by $\mu$ and $\eta$) satisfies a {\it large margin condition} stipulating, roughly, that $\eta$ moves gracefully away from $1/2$ near the decision boundary.

Following \cite{MT99,T04,AT07}, for any $\beta \geq 0$, we say $\P$ satisfies the {\it $\beta$-margin condition} if there exists a constant $C > 0$ such that
\[ 
\mu \left(\left\{x \ \Big| \ \big{|}\eta(x) - \frac{1}{2}\big{|} \leq t \right\}\right) \ \leq \ C t^{\beta}. 
\]
Larger $\beta$ implies a larger margin. We now obtain bounds for the misclassification rate and the excess risk of $k$-NN under smoothness and margin conditions. 
\begin{Thm}
Suppose $\eta$ is $(\alpha,L)$-smooth in $(\X, \rho, \mu)$ and satisfies the $\beta$-margin condition (with constant $C$), for some $\alpha, \beta, L, C \geq 0$. In each of the two following statements, $k_o$ and $C_o$ are constants depending on $\alpha, \beta, L, C$.
\begin{enumerate}
\item[(a)] For any $0 < \delta < 1$, set $k = k_o n^{2\alpha/(2\alpha+1)} (\log (1/\delta))^{1/(2\alpha+1)}$. With probability at least $1-\delta$ over the choice of training data,
$$ \pr_X(g_{n,k}(X) \neq g(X)) \ \leq \ \delta + C_o \left( \frac{\log (1/\delta)}{n} \right)^{\alpha\beta/(2\alpha+1)}.$$
\item[(b)] Set $k = k_o n^{2\alpha/(2\alpha+1)}$. Then
$\E_n R_{n,k} - R^* \ \leq \ C_o n^{-\alpha(\beta+1)/(2\alpha+1)}.$
\end{enumerate}
\label{thm:margin}
\end{Thm}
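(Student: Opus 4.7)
The engine for both parts is Theorem~\ref{thm:general} combined with Lemma~\ref{lemma:smooth-upper} and the $\beta$-margin condition: Lemma~\ref{lemma:smooth-upper} places $\partial_{p,\Delta} \cap \supp(\mu) \subseteq \{x : |\eta(x) - \tfrac{1}{2}| \leq \Delta + L p^\alpha\}$, and the margin condition then gives
\begin{equation*}
\mu(\partial_{p,\Delta}) \;\leq\; C(\Delta + L p^\alpha)^\beta.
\end{equation*}
The constants $p, \Delta$ from Theorem~\ref{thm:general} satisfy $p \leq 2k/n$ and $\Delta \leq \sqrt{\log(2/\delta)/k}$ whenever $k \geq 16 \log(2/\delta)$, a condition that will be met by the choices of $k$ below.

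For part (a), Theorem~\ref{thm:general} reads: with probability $\geq 1-\delta$,
\begin{equation*}
\pr_X(g_{n,k}(X) \neq g(X)) \;\leq\; \delta + C(\Delta + L p^\alpha)^\beta.
\end{equation*}
Balance the two terms inside the parenthesis via $\sqrt{\log(1/\delta)/k} \asymp L(k/n)^\alpha$, whose solution is the stated $k = k_o\, n^{2\alpha/(2\alpha+1)} (\log(1/\delta))^{1/(2\alpha+1)}$. Then both $\Delta$ and $L p^\alpha$ are of order $(\log(1/\delta)/n)^{\alpha/(2\alpha+1)}$, so $(\Delta + L p^\alpha)^\beta \leq C_o (\log(1/\delta)/n)^{\alpha\beta/(2\alpha+1)}$, establishing (a).

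For part (b), the extra factor $(\Delta + L p^\alpha)$ that lifts the exponent from $\beta$ to $\beta+1$ emerges from the pointwise identity (\ref{eq:risk}). Split $R_{n,k} - R^* = \E_X\bigl[|1 - 2\eta(X)|\,\mathbf{1}(g_{n,k}(X) \neq g(X))\bigr]$ according to whether $X \in \partial_{p,\Delta}$. On the boundary, smoothness forces $|1 - 2\eta(x)| \leq 2(\Delta + L p^\alpha)$, so that contribution is at most $2(\Delta + L p^\alpha)\,\mu(\partial_{p,\Delta}) \leq 2C(\Delta + L p^\alpha)^{\beta+1}$. Off the boundary, the proof of Theorem~\ref{thm:general} (which rests on Hoeffding concentration for both the $k$-th nearest-neighbor radius and the mean of $k$ labels) furnishes $\pr_X(g_{n,k}(X) \neq g(X),\,X \notin \partial_{p,\Delta}) \leq \delta$ with probability $\geq 1-\delta$. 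Combining, with probability $\geq 1-\delta$,
\begin{equation*}
R_{n,k} - R^* \;\leq\; 2C(\Delta + L p^\alpha)^{\beta+1} + \delta.
\end{equation*}
Now fix $k = k_o\, n^{2\alpha/(2\alpha+1)}$; then $\Delta + L p^\alpha \leq C_1 (1 + \sqrt{\log(2/\delta)})\, t_*$ with $t_* := n^{-\alpha/(2\alpha+1)}$. Inverting gives the tail bound $\pr_n(R_{n,k} - R^* > s) \leq 2\exp\bigl(-c\,(s/t_*^{\beta+1})^{2/(\beta+1)}\bigr)$ valid for $s \geq C_2\, t_*^{\beta+1}$; integrating $\E_n[R_{n,k} - R^*] = \int_0^\infty \pr_n(R_{n,k} - R^* > s)\,ds$ yields $\E_n[R_{n,k} - R^*] \leq C_o\, t_*^{\beta+1} = C_o\, n^{-\alpha(\beta+1)/(2\alpha+1)}$, free of log factors.

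The main obstacle is part (b)'s $\beta \to \beta+1$ exponent improvement: although the boundary has mass at most $C(\Delta + L p^\alpha)^\beta$, each boundary point contributes at most $2(\Delta + L p^\alpha)$ to the excess risk via (\ref{eq:risk}) and smoothness, yielding the extra factor. Without this step one recovers only the rate of part (a); moreover, the integration-over-$\delta$ conversion in the final display requires the stretched-exponential (rather than merely polynomial) tail in order to absorb the $\sqrt{\log(2/\delta)}$ dependence into a constant.
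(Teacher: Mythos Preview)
Your argument for part~(a) coincides with the paper's: both apply Theorem~\ref{thm:general}, bound $\mu(\partial_{p,\Delta})$ via Lemma~\ref{lemma:smooth-upper} and the margin condition, and balance $\sqrt{\log(1/\delta)/k}$ against $L(k/n)^\alpha$.

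For part~(b), your route is correct but genuinely different from the paper's. The paper works \emph{pointwise in expectation}: it first proves (Lemma~\ref{lemma:pointwise-expected-risk}) that for each $x$ with $\Delta(x):=|\eta(x)-\tfrac12|>\Delta_o:=L(2k/n)^\alpha$,
\[
\E_n R_{n,k}(x)-R^*(x)\ \le\ e^{-k/8}+4\Delta(x)\,e^{-2k(\Delta(x)-\Delta_o)^2},
\]
then takes $\E_X$ and handles the resulting integral by a dyadic decomposition $\Delta_i=2^i\Delta_o$, using the margin condition on each shell and summing a geometric series (Lemma~\ref{lemma:expected-risk}). Your argument instead stays at the level of the \emph{random} excess risk $R_{n,k}-R^*$: you split over $\partial_{p,\Delta}$, use $|1-2\eta|\le 2(\Delta+Lp^\alpha)$ on the boundary to gain the extra factor (this is the same mechanism that produces $\beta+1$ in the paper, just packaged differently), bound the off-boundary piece by $\delta$ via the proof of Theorem~\ref{thm:general}, and then convert the resulting high-probability bound into an expectation by inverting to a stretched-exponential tail and integrating. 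Both approaches deliver the log-free rate $n^{-\alpha(\beta+1)/(2\alpha+1)}$; the paper's dyadic argument is more self-contained (no tail inversion), while yours reuses Theorem~\ref{thm:general} more directly at the cost of the integration step. One bookkeeping point worth making explicit in your write-up: the constraint $k\ge 16\log(2/\delta)$ (needed for $p\le 2k/n$) must hold for every $\delta$ you invoke during the tail integration; since $R_{n,k}-R^*\le 1$, the smallest $\delta$ required corresponds to $\log(2/\delta)\lesssim t_*^{-2}=n^{2\alpha/(2\alpha+1)}$, which is satisfied by $k=k_o n^{2\alpha/(2\alpha+1)}$ for $k_o$ large enough.
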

It is instructive to compare these bounds with the best known rates for nonparametric classification under the margin assumption. The work of \cite{AT07} (Theorems 3.3 and 3.5) shows that when $(\X, \rho) = (\R^d, \| \cdot \|)$, and $\eta$ is $\alpha_H$-Holder continuous, and $\mu$ lies in the range $[\mu_{\rm\scriptsize min}, \mu_{\rm\scriptsize max}]$ for some $\mu_{\rm\scriptsize max} > \mu_{\rm\scriptsize min} > 0$, and the $\beta$-margin condition holds (along with some other assumptions), an excess risk of $n^{-\alpha_H (\beta+1) / (2 \alpha_H + d)}$ is achievable and is also the best possible. This is exactly the rate achieved by nearest neighbor classification, once we translate between the different notions of smoothness as per Lemma~\ref{lemma:relation-to-Holder}.

Another interesting scenario is when $\eta$ is bounded away from $1/2$, that is, there exists some $\Delta^*$ for which
\[  \mu \left(\left\{x \ \Big| \ \big{|} \eta(x) - \frac{1}{2} \big{|} \leq \Delta^*\right\}\right) = 0 \]
It follows from Lemma~\ref{lemma:smooth-upper} that if $\eta$ is $(\alpha,L)$-smooth in $(\calX, \rho, \mu)$, then $\mu(\partial_{p, \Delta}) = 0$ whenever $\Delta + L p^{\alpha} \leq \Delta^*$. Invoking Theorem~\ref{thm:general} with
\[ k = \frac{n}{2} \left( \frac{\Delta^*}{2L} \right)^{1/\alpha}, \; \; \delta = 2e^{-k (\Delta^*)^2/4}, \]yields an exponentially fast rate of convergence: $\Pr(g_n(X) \neq g(X)) \leq 2 e^{-C_on}$, where $C_o = (\Delta^*)^{2 + 1/\alpha}/(8 (2L)^{1/\alpha})$.

A final case of interest is when $\eta \in \{0,1\}$, so that the Bayes risk $R^*$ is zero. We treat this in Section~\ref{sec:zero-bayes-risk} in the appendix.

\clearpage

\bibliographystyle{plain}
\bibliography{knn}

\clearpage
\section*{Appendix: Analysis}

\subsection{A tie-breaking mechanism}
\label{sec:tie-breaking}

In some situations, such as discrete instance spaces, there is a non-zero probability that two or more of the training points will be equidistant from the query point. In practice, we break ties by a simple rule such as preferring points that appear earlier in the sequence. To accurately reflect this in the analysis, we adopt the following mechanism: for each training point $X$, we also draw a value $Z$ independently and uniformly at random from $[0,1]$. When breaking ties, points with lower $Z$ value are preferred. We use the notation $X' = (X, Z) \in \calX \times [0,1]$ to refer to the {\it augmented} training instances, drawn from the product measure $\mu \times \nu$, where $\nu$ is Lebesgue measure on $[0,1]$.

Given a query point $x \in \calX$ and training points $X_1', \ldots, X_n' \in \calX \times [0,1]$, let $X_{(1)}'(x), \ldots, X_{(n)}'(x)$ denote a reordering of these points by increasing distance from $x$, where each $X_{(i)}'$ is of the form $(X_{(i)}, Z_{(i)})$. With probability 1, this ordering is unambiguous. Also, let $Y_{(1)}(x), \ldots, Y_{(n)}(x)$ be the corresponding labels.

We will need to consider balls in the augmented space. For $x_o \in \calX$, $r_o \geq 0$, and $z_o \in [0,1]$, define
\begin{align*}
B'(x_o, r_o, z_o) 
&= 
\{(x,z) \in \calX \times [0,1]\ |\  \mbox{either $\rho(x_o, x) < r_o$ or ($\rho(x_o, x) = r_o$ and $z < z_o$)}\} \\
&= 
\big(B^o(x_o, r_o) \times [0,1] \big) \bigcup \big((B(x_o, r_o) \setminus B^o(x_o, r_o)) \times [0,z_o) \big) .
\end{align*} 
Given a set of training points $(X_i', Y_i)$ and an augmented ball $B' = B'(x_o, r_o, z_o)$, let $\widehat{Y}(B')$ denote the mean of the $Y_i$ for points $X_i' \in B'$; if there is no $X_i' \in B'$, then this is undefined. 

Let $\eta(B')$ denote the mean probability that $Y=1$ for points $(x,z) \in B'$; formally, it is given by
$$ \eta(B') \ = \ \frac{1}{(\mu \times \nu)(B')} \int_{B'} \eta \, d(\mu \times \nu) $$
whenever $(\mu\times\nu)(B') > 0$. Here $\eta(x,z)$ is defined to be $\eta(x)$.

The ball $B' = B(x_o, r_o, z_o)$ in the augmented space can be thought of as lying between the open ball $B^o = B^o(x_o, r_o)$ and the closed ball $B = B(x_o, r_o)$ in the original space; and indeed $\eta(B')$ is a convex combination of $\eta(B)$ and $\eta(B^o)$ (Lemma~\ref{lemma:augmented-eta}).

\subsection{Proof of Theorem~\ref{thm:general}}
\label{sec:proof-thm-general}

Theorem~\ref{thm:general} rests on the following basic observation.
\begin{Lem}
  Let $g_{n,k}$ denote the $k$-NN classifier based on training data $(X_1', Y_1), \ldots, (X_n', Y_n)$. Pick any $x_o \in \calX$ and any $0 \leq p \leq 1$, $0 \leq \Delta \leq 1/2$. Let $B' = B'(x_o, \rho(x_o, X_{(k+1)}(x_o)), Z_{(k+1)})$. Then
\begin{eqnarray*}
1(g_{n,k}(x_o) \neq g(x_o)) 
&\leq &
1(x_o \in \partial_{p,\Delta}) + \\
& & 1(\rho(x_o, X_{(k+1)}(x_o)) > r_p(x_o)) + \\
&& 1(|\widehat{Y}(B') - \eta(B')| \geq \Delta).
\end{eqnarray*}
\label{lemma:main-inequality}
\end{Lem}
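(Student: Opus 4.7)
The plan is to prove the contrapositive: if all three indicators on the right-hand side vanish at $x_o$, then $g_{n,k}(x_o) = g(x_o)$, so the left-hand indicator also vanishes. Assume therefore that (i) $x_o \in \calX_{p,\Delta}^+ \cup \calX_{p,\Delta}^-$, (ii) $r := \rho(x_o, X_{(k+1)}(x_o)) \leq r_p(x_o)$, and (iii) $|\widehat{Y}(B') - \eta(B')| < \Delta$, where $B' = B'(x_o, r, Z_{(k+1)})$.

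The first step is to unpack the definition of $B'$ in the augmented space. By the construction of $B'(x_o, r, Z_{(k+1)})$, the augmented training points it contains are precisely those that precede $X_{(k+1)}'$ in the lexicographic order $(\rho(x_o, \cdot), Z)$, namely $X_{(1)}', \ldots, X_{(k)}'$. Hence $\widehat{Y}(B') = (Y_{(1)} + \cdots + Y_{(k)})/k$ and $g_{n,k}(x_o) = 1(\widehat{Y}(B') \geq 1/2)$, so it suffices to control $\widehat{Y}(B')$ relative to $1/2$.

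Next I would handle the case $x_o \in \calX_{p,\Delta}^+$; the other case is symmetric. By the definition of the effective interior, $\eta(B(x_o, r')) \geq 1/2 + \Delta$ for every $r' \leq r_p(x_o)$, and in particular for $r' = r$ by assumption (ii). For the corresponding open ball, send $r' \uparrow r$ and apply monotone convergence separately to $\int \eta \, d\mu$ and $\mu(B(x_o, \cdot))$ to conclude $\eta(B^o(x_o, r)) \geq 1/2 + \Delta$ whenever $\mu(B^o(x_o,r)) > 0$ (if it is zero, the open ball simply drops out of the convex combination below). Invoking Lemma~\ref{lemma:augmented-eta}, $\eta(B')$ is a convex combination of $\eta(B(x_o, r))$ and $\eta(B^o(x_o, r))$, so $\eta(B') \geq 1/2 + \Delta$. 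Combined with (iii) this gives $\widehat{Y}(B') > 1/2$, so $g_{n,k}(x_o) = 1$. Since $\eta(x_o) > 1/2$ on $\calX_{p,\Delta}^+$, we also have $g(x_o) = 1$, so the classifiers agree.

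The main technical care sits in the interaction between the augmented ball $B'$ and the original open and closed balls: one must confirm both that $B'$ captures exactly the $k$ nearest neighbors (so that $\widehat{Y}(B')$ is genuinely the $k$-NN average with the stated tie-breaking), and that the one-sided bound on $\eta$ from the closed-ball definition of $\calX_{p,\Delta}^\pm$ transfers to $B'$, which mixes the open and closed balls. With Lemma~\ref{lemma:augmented-eta} in hand, the second point reduces to the short monotone-convergence argument above, and the rest of the proof is a one-line concentration deduction.
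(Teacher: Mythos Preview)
Your proof is correct and follows essentially the same route as the paper's: argue by contrapositive, place $x_o$ in (say) $\calX_{p,\Delta}^+$, use the closed-ball bound from the definition of the effective interior, pass to the open ball by a limit argument, combine via Lemma~\ref{lemma:augmented-eta} to get $\eta(B') \geq 1/2 + \Delta$, and conclude from $|\widehat{Y}(B') - \eta(B')| < \Delta$. The only cosmetic difference is that the paper packages your monotone-convergence step for the open ball as a separate lemma (Lemma~\ref{lemma:open-eta}), whereas you derive it inline; the content is identical.
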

\begin{proof}
Suppose $x_o \not\in \partial_{p, \Delta}$. Then, without loss of generality, $x_o$ lies in $\calX_{p,\Delta}^+$, whereupon $\eta(B(x_o, r)) \geq 1/2 + \Delta$ for all $r \leq r_p(x_o)$.

Next, suppose $r = \rho(x_o, X_{(k+1)}(x_o)) \leq r_p(x_o)$. Then $\eta(B(x_o, r))$ and $\eta(B^o(x_o, r))$ are both $\geq 1/2 + \Delta$ (Lemma~\ref{lemma:open-eta}). By Lemma~\ref{lemma:augmented-eta}, $\eta(B')$ is a convex combination of these and is thus also $\geq 1/2 + \Delta$.

The prediction $g_{n,k}(x_o)$ is based on the average of the $Y_i$ values of the $k$ points closest to $x_o$, in other words, $\widehat{Y}(B')$. If this average differs from $\eta(B')$ by less than $\Delta$, then it is $> 1/2$, whereupon the prediction is correct.
\end{proof}

When we take expectations in the inequality of Lemma~\ref{lemma:main-inequality}, we see that there are three probabilities to be bounded. The second of these, the probability that $\rho(x_o, X_{(k+1)}(x_o)) > r_p(x_o)$, can easily be controlled when $p$ is sufficiently large.
\begin{Lem}
Fix any $x_o \in \calX$ and $0 \leq p, \gamma \leq 1$. Pick any positive integer $k \leq (1-\gamma) np$. Let $X_1, \ldots, X_n$ be chosen uniformly at random from $\mu$. Then
$$ \pr_n(\rho(x_o, X_{(k+1)}(x_o)) > r_p(x_o)) \leq e^{-np\gamma^2/2} \leq e^{-k \gamma^2/2}.$$
\label{lemma:enough-pts-in-ball}
\end{Lem}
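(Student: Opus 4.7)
The plan is to translate the geometric event into a Binomial tail bound and then apply a Chernoff inequality. First, I would observe that $\rho(x_o, X_{(k+1)}(x_o)) > r_p(x_o)$ is equivalent to the event that at most $k$ of the sample points $X_1, \ldots, X_n$ lie in the closed ball $B(x_o, r_p(x_o))$: if the $(k+1)$-st nearest point has distance strictly greater than $r_p(x_o)$, then so do all later order statistics, and conversely. This is a claim purely about distances, so it holds regardless of how ties are broken. Letting $N = |\{i : X_i \in B(x_o, r_p(x_o))\}|$, we have $N \sim \mathrm{Binomial}(n, q)$ with $q = \mu(B(x_o, r_p(x_o)))$, and Lemma~\ref{lemma:probability-radius} guarantees $q \ge p$.

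The target probability is therefore $\pr_n(N \le k)$. Because $q \ge p$, a coupling via independent uniforms on $[0,1]$ shows that $N$ stochastically dominates $N' \sim \mathrm{Binomial}(n, p)$, so $\pr(N \le k) \le \pr(N' \le k)$. The hypothesis $k \le (1-\gamma) np = (1-\gamma)\, \E[N']$ combined with the standard lower-tail multiplicative Chernoff bound
\[\pr\bigl(N' \le (1-\gamma) np\bigr) \ \le \ \exp(-\gamma^2 np / 2)\]
yields the first inequality in the lemma. The second inequality, $e^{-np\gamma^2/2} \le e^{-k\gamma^2/2}$, is immediate from $np \ge k/(1-\gamma) \ge k$.

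There is no real obstacle here; the only fine point to watch is the direction of monotonicity, namely that $q \ge p$ makes $\{N \le k\}$ less likely rather than more. This is cleanly handled by the coupling, avoiding any need to rework the Chernoff parameter in terms of $q$.
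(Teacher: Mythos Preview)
Your proposal is correct and follows essentially the same approach as the paper: reduce the geometric event to a binomial lower-tail bound via Lemma~\ref{lemma:probability-radius}, then apply the multiplicative Chernoff inequality. The paper's proof is terser---it does not spell out the equivalence with the event $\{N \le k\}$ and handles $q \ge p$ implicitly rather than via an explicit coupling---but the substance is identical.
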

\begin{proof}
The probability that any given $X_i$ falls in $B(x_o, r_p(x_o))$ is at least $p$ (Lemma~\ref{lemma:probability-radius}). The probability that $\leq k \leq (1-\gamma)np$ of them land in this ball is, by the multiplicative Chernoff bound, at most $e^{-np\gamma^2/2}$.
\end{proof}

To bound the probability that $\widehat{Y}(B')$ differs substantially from $\eta(B')$, a slightly more careful argument is needed.
\begin{Lem}
Fix any $x_o \in \calX$ and any $0 \leq \Delta \leq 1/2$. Draw $(X_1, Z_1, Y_1), \ldots, (X_n, Z_n, Y_n)$ independently at random and let $B' = B'(x_o, \rho(x_o, X_{(k+1)}(x_o)), Z_{(k+1)}) \subset \calX \times [0,1]$. Then
$$ \pr_n(|\widehat{Y}(B') - \eta(B') | \geq \Delta) 
\ \leq \ 
2 e^{-2k \Delta^2} .
$$
Moreover, if $\eta(B') \in \{0,1\}$ then $\widehat{Y}(B') = \eta(B')$ with probability one.
\label{lemma:y-avg-deviation}
\end{Lem}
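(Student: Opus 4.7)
The plan is to reduce the claim to a standard Hoeffding bound for $k$ i.i.d.\ Bernoulli random variables, with the wrinkle that the ball $B'$ is itself random. The key unlocking fact is the following order-statistic observation: conditional on the augmented position $(X_{(k+1)}, Z_{(k+1)})$ of the $(k+1)$-st nearest neighbor---equivalently, on $B'$ itself---the $k$ augmented points $X_{(1)}', \ldots, X_{(k)}'$ lying inside $B'$ are i.i.d.\ draws from $\mu \times \nu$ restricted to $B'$. This is a consequence of exchangeability: summing over the $n$ choices of which sample plays the role of the $(k+1)$-st nearest neighbor, the remaining $n-1$ points are i.i.d., and conditioning on exactly $k$ of them lying in $B'$ turns those $k$ into i.i.d.\ draws from $(\mu\times\nu)|_{B'}$. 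The tie-breaking augmentation guarantees that the $(k+1)$-st nearest neighbor is almost surely unique and that $B'$ contains exactly $k$ training points.

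First I would condition on $B'$ and apply this conditional-i.i.d.\ fact. Since each $Y_i$ given $X_i' = (x,z)$ is Bernoulli$(\eta(x))$, the conditional marginal of each $Y_{(i)}$ given $B'$ is Bernoulli with parameter
$$ \frac{1}{(\mu\times\nu)(B')} \int_{B'} \eta \, d(\mu\times\nu) \;=\; \eta(B'), $$
and the $Y_{(i)}$'s are mutually independent given $B'$. Hoeffding's inequality for $k$ independent $[0,1]$-valued variables then yields $\pr_n(|\widehat{Y}(B') - \eta(B')| \geq \Delta \mid B') \leq 2 e^{-2k\Delta^2}$. Because this bound is uniform in $B'$, integrating over $B'$ produces the unconditional inequality claimed in the lemma.

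For the second assertion, if $\eta(B') \in \{0,1\}$ then $\eta(x) = \eta(B')$ must hold for $(\mu\times\nu)$-almost every $(x,z) \in B'$, so the conditional distribution of each $Y_{(i)}$ (whose augmented position lies in $B'$) is the point mass at $\eta(B')$. Hence $\widehat{Y}(B') = \eta(B')$ almost surely.

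The main obstacle is the order-statistic conditioning step: one must justify rigorously that conditioning on the $(k+1)$-st augmented neighbor really leaves the $k$ closer points as an i.i.d.\ sample from $(\mu\times\nu)|_{B'}$. The $Z$-augmentation is critical here, because it eliminates ties and makes the atom-free structure of the augmented distribution available---so the standard symmetry argument for uniform order statistics applies directly. Once this is pinned down, everything else is routine Hoeffding and Fubini.
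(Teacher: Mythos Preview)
Your proposal is correct and follows essentially the same approach as the paper: the paper realizes your conditioning argument constructively by describing an equivalent sampling procedure (first draw the $(k+1)$st augmented neighbor from its marginal, then draw $k$ i.i.d.\ points from $(\mu\times\nu)|_{B'}$ and $n-k-1$ from the complement, then permute and label), which is exactly the order-statistic fact you invoke. Both arguments then apply Hoeffding to the $k$ conditionally i.i.d.\ Bernoulli$(\eta(B'))$ labels, and the final assertion is handled identically.
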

\begin{proof}
We will pick the points $X_i' = (X_i, Z_i)$ and their labels $Y_i$ in the following manner:
\begin{enumerate}
\item First pick a point $(X_1, Z_1) \in \calX \times [0,1]$ according to the marginal distribution of the $(k+1)$st nearest neighbor of $x_o$.
\item Pick $k$ points uniformly at random from the distribution $\mu \times \nu$ restricted to $B' = B'(x_o, \rho(x_o, X_1), Z_1)$.
\item Pick $n - k - 1$ points uniformly at random from the distribution $\mu \times \nu$ restricted to $(\calX \times [0,1]) \setminus B'$.
\item Randomly permute the $n$ points obtained in this way.
\item For each $(X_i, Z_i)$ in the permuted order, pick a label $Y_i$ from the conditional distribution $\eta(X_i)$.

\end{enumerate}
The $k$ nearest neighbors of $x_o$ are the points picked in step 2. Their $Y$ values are independent and identically distributed with expectation $\eta(B')$. The main bound in the lemma now follows from a direct application of Hoeffding's inequality.

The final statement of the lemma is trivial and is needed to cover situations in which $\Delta = 1/2$.
\end{proof}

We now complete the proof of Theorem~\ref{thm:general}. Adopt the settings of $p$ and $\Delta$ from the theorem statement, and define the central bad event to be
$$ \mbox{\sc bad}(X_o, X_1', \ldots, X_n', Y_1, \ldots, Y_n) 
\ = \ 
1(\rho(X_o, X_{(k+1)}(X_o)) > r_{p}(X_o)) + 1(|\widehat{Y}(B') - \eta(B')| \geq \Delta) ,$$
where $B'$ is a shorthand for $B'(X_o, \rho(X_o, X_{(k+1)}(X_o)), Z_{(k+1)})$, as before. Fix any $x_o \in \X$. If $\Delta < 1/2$, then by Lemmas~\ref{lemma:enough-pts-in-ball} and \ref{lemma:y-avg-deviation}, 
$$ \E_n \mbox{\sc bad}(x_o, X_1', \ldots, X_n', Y_1, \ldots, Y_n)
\ \leq \ \exp(-k\gamma^2/2) + 2 \exp(-2k\Delta^2)
\ \leq \ \delta^2 ,
$$ 
where $\gamma = 1-(k/np) = \sqrt{(4/k) \ln (2/\delta)}$ and $\E_n$ is expectation over the choice of training data.
If $\Delta = 1/2$ then $\eta(B') \in \{0,1\}$ and we have
$$ \E_n \mbox{\sc bad}(x_o, X_1', \ldots, X_n', Y_1, \ldots, Y_n)
\ \leq \ \exp(-k\gamma^2/2)
\ \leq \ \delta^2 .
$$ 
Taking expectation over $X_o$,
$$ \E_{X_o} \E_n \mbox{\sc bad}(X_o, X_1', \ldots, X_n', Y_1, \ldots, Y_n) \leq \delta^2 ,$$
from which, by switching expectations and applying Markov's inequality, we have
$$\pr_n(\E_{X_o} \mbox{\sc bad}(X_o, X_1', \ldots, X_n', Y_1, \ldots, Y_n) \geq \delta) \leq \delta .$$
The theorem then follows by writing the result of Lemma~\ref{lemma:main-inequality} as
$$ \pr_{X_o}(g_{n,k}(X_o) \neq g(X_o)) \leq \mu(\partial_{p,\Delta}) + \E_{X_o} \mbox{\sc bad}(X_o, X_1', \ldots, X_n', Y_1, \ldots, Y_n).$$

\subsection{Proof of Theorem~\ref{thm:consistency}}

Recall that we define $R_n = \pr_X(g_{n,k_n}(X) \neq Y)$. From equation (\ref{eq:risk}), we have:
$$ R_n - R^* \ \leq \ \pr_X(\eta(X) \neq 1/2 \mbox{\ and\ } g_{n,k_n}(X) \neq g(X)) .$$
Defining $\partial_o = \{x \in \calX\ |\  \eta(x) = 1/2 \}$ to be the decision boundary,
we then have the following corollary of Theorem~\ref{thm:general}.
\begin{Cor}
Let $(\delta_n)$ be any sequence of positive reals, and $(k_n)$ any sequence of positive integers. For each $n$, define $(p_n)$ and $(\Delta_n)$ as in Theorem~\ref{thm:general}. Then
$$ 
\pr_n\big(R_n - R^* > \delta_n + \mu\big(\partial_{p_n, \Delta_n}\setminus \partial_o\big) \big) \leq \delta_n,$$
where $\pr_n$ is probability over the choice of training data.
\label{cor:general}
\end{Cor}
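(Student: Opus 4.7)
The corollary is essentially a refinement of Theorem~\ref{thm:general} that takes advantage of the fact that points on the decision boundary $\partial_o = \{x : \eta(x) = 1/2\}$ contribute nothing to the excess risk. My plan is to revisit the proof of Theorem~\ref{thm:general} and insert the indicator $1(\eta(x_o) \neq 1/2)$ at the pointwise level, which strengthens the bound from $\mu(\partial_{p_n,\Delta_n})$ to $\mu(\partial_{p_n,\Delta_n} \setminus \partial_o)$ at essentially no extra cost.

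First, I would use the pointwise risk identity (\ref{eq:risk}), namely $R_{n,k_n}(x) - R^*(x) = |1-2\eta(x)|\,1(g_{n,k_n}(x) \neq g(x))$. Taking expectation over $X$ and discarding the factor $|1-2\eta(X)| \leq 1$ except on $\partial_o$, where it vanishes, gives
\[
R_n - R^* \;\leq\; \pr_X\big(\eta(X) \neq 1/2 \text{ and } g_{n,k_n}(X) \neq g(X)\big).
\]
Next, I would apply Lemma~\ref{lemma:main-inequality} and multiply both sides by $1(\eta(x_o) \neq 1/2)$. Since the set $\{x_o \in \partial_{p_n,\Delta_n}\} \cap \{\eta(x_o) \neq 1/2\}$ equals $\partial_{p_n,\Delta_n} \setminus \partial_o$, the resulting bound is
\[
1(\eta(x_o) \neq 1/2)\,1(g_{n,k_n}(x_o) \neq g(x_o)) \;\leq\; 1(x_o \in \partial_{p_n,\Delta_n} \setminus \partial_o) \;+\; \mbox{\sc bad}(x_o, \ldots),
\]
where $\mbox{\sc bad}$ is exactly the quantity defined in the proof of Theorem~\ref{thm:general}.

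Integrating over $X_o \sim \mu$ produces $\mu(\partial_{p_n,\Delta_n}\setminus\partial_o)$ from the first term, while the second term is untouched. From here the argument proceeds verbatim as in the proof of Theorem~\ref{thm:general}: Lemmas~\ref{lemma:enough-pts-in-ball} and \ref{lemma:y-avg-deviation} with the chosen $p_n$, $\Delta_n$ give $\E_n \mbox{\sc bad}(x_o, \ldots) \leq \delta_n^2$ for each $x_o$, then Fubini and Markov's inequality yield
\[
\pr_n\!\left(\E_{X_o}\mbox{\sc bad}(X_o, \ldots) \geq \delta_n\right) \;\leq\; \delta_n,
\]
which is the desired bound.

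There is no real obstacle here beyond being careful with the indicator insertion. The only point worth checking is that multiplying by $1(\eta(x_o) \neq 1/2)$ does not interfere with the way the $\mbox{\sc bad}$ term is controlled: it appears inside the expectation $\E_{X_o}$ at the very end, but since $1(\eta(x_o) \neq 1/2) \leq 1$, the bound on $\E_n \mbox{\sc bad}$ established pointwise in the original proof passes through without modification. So the corollary is really a one-line strengthening of Theorem~\ref{thm:general} made visible by rewriting inequality (\ref{eq:risk}).
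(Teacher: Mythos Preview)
Your proposal is correct and follows the same line as the paper. The paper presents the corollary as immediate from Theorem~\ref{thm:general} together with the observation $R_n - R^* \leq \pr_X(\eta(X) \neq 1/2 \text{ and } g_{n,k_n}(X) \neq g(X))$, leaving the $\setminus\partial_o$ refinement implicit; you have simply spelled out the one extra step---multiplying the inequality of Lemma~\ref{lemma:main-inequality} by $1(\eta(x_o)\neq 1/2)$---that makes this rigorous.
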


For the rest of the proof, assume that $(\calX, \rho, \mu)$ satisfies Lebesgue's differentiation theorem: that is, for any bounded measurable $f: \calX \rightarrow \bbR$, 
$$ \lim_{r \downarrow 0} \frac{1}{\mu(B(x,r))} \int_{B(x,r)} f \, d\mu = f(x) $$
for almost all ($\mu$-a.e.) $x \in \calX$. We'll see that, as a result, $\mu(\partial_{p_n, \Delta_n} \setminus \partial_o) \rightarrow 0$.

\begin{Lem}
There exists $\calX_o \subset \calX$ with $\mu(\calX_o) = 0$, such that any $x \in \calX \setminus \calX_o$ with $\eta(x) \neq 1/2$ lies in $\calX_{p,\Delta}^+ \cup \calX_{p,\Delta}^-$ for some $p, \Delta > 0$.
\label{lemma:effective-boundary-limit}
\end{Lem}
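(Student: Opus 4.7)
My plan is to apply the Lebesgue differentiation theorem to the bounded measurable function $\eta$ itself, using the hypothesis (\ref{eq:differentiation}) that was assumed for the ambient metric measure space.

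Define $\calX_o$ to be the union of two null sets: first, the set $\calX \setminus \supp(\mu)$ (which has $\mu$-measure zero by Lemma~\ref{lemma:support}); second, the set of points at which the differentiation relation fails for the choice $f = \eta$, namely those $x$ at which $\lim_{r \downarrow 0} \eta(B(x,r)) \neq \eta(x)$. Since $\eta : \calX \to [0,1]$ is bounded and measurable, hypothesis (\ref{eq:differentiation}) ensures that this second set has $\mu$-measure zero. Hence $\mu(\calX_o) = 0$.

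Now fix any $x \in \calX \setminus \calX_o$ with $\eta(x) \neq 1/2$, and suppose without loss of generality that $\eta(x) > 1/2$ (the other case is symmetric and will yield membership in $\calX_{p,\Delta}^-$). Set
\[ \Delta \ = \ \tfrac{1}{2}\bigl(\eta(x) - \tfrac{1}{2}\bigr) \ > \ 0 . \]
Because $x \notin \calX_o$, the differentiation limit holds at $x$, so there exists $r_0 > 0$ such that $|\eta(B(x,r)) - \eta(x)| \leq \Delta$ for every $r \leq r_0$, which gives $\eta(B(x,r)) \geq \eta(x) - \Delta = \tfrac{1}{2} + \Delta$ on that range of radii.

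Finally, since $x \in \supp(\mu)$, we have $\mu(B(x, r_0)) > 0$; set $p = \mu(B(x, r_0)) > 0$. By the definition of $r_p(x)$ as an infimum, $r_p(x) \leq r_0$, so every $r \leq r_p(x)$ satisfies $\eta(B(x,r)) \geq 1/2 + \Delta$. Combined with $\eta(x) > 1/2$, this puts $x \in \calX_{p,\Delta}^+$, as required. The only subtle point is ensuring that the differentiation theorem is indeed applicable to $\eta$ (which is immediate because $\eta$ is measurable with values in $[0,1]$), and that $\supp(\mu)$ can be excluded without loss; both are handled by packaging them into $\calX_o$.
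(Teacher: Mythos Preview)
Your proof is correct and follows essentially the same route as the paper's: both apply the differentiation hypothesis (\ref{eq:differentiation}) to $f=\eta$, take $\calX_o$ to be the union of the complement of the support and the differentiation-failure set, set $\Delta = \tfrac{1}{2}(\eta(x)-\tfrac{1}{2})$, choose $r_0$ from the limit, and then take $p = \mu(B(x,r_0))$ so that $r_p(x)\le r_0$.
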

\begin{proof}
As a result of the differentiation condition,
\begin{equation}
\lim_{r \downarrow 0} \eta(B(x,r)) 
\ = \ 
\lim_{r \downarrow 0} \frac{1}{\mu(B(x,r))} \int_{B(x,r)} \eta \, d\mu 
\ = \ 
\eta(x)
\label{eq:lebesgue}
\end{equation}
for almost all ($\mu$-a.e.) $x \in \calX$. Let $\calX_o$ denote the set of $x$'s for which (\ref{eq:lebesgue}) fails to hold or that are outside $\supp(\mu)$. Then, $\mu(\calX_o) = 0$.

Now pick any $x \not\in \calX_o$ such that $\eta(x) \neq 1/2$. Without loss of generality, $\eta(x) > 1/2$. Set $\Delta = (\eta(x) - 1/2)/2 > 0$. By (\ref{eq:lebesgue}), there is some $r_o > 0$ such that $\eta(B(x,r)) \geq 1/2 + \Delta$ whenever $0 \leq r \leq r_o$. Define $p = \mu(B(x,r_o)) > 0$. Then $r_p(x) \leq r_o$ and $x \in \calX_{p,\Delta}^+$.
\end{proof}

\begin{Lem}
If $p_n, \Delta_n \downarrow 0$, then
$$\lim_{n \rightarrow \infty} \mu \big( \partial_{p_n, \Delta_n} \setminus \partial_o \big) = 0 .$$
\label{lemma:boundary-convergence}
\end{Lem}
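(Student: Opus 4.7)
The plan is to exploit monotonicity of the sets $\partial_{p,\Delta}$ in the parameters $p$ and $\Delta$, together with the pointwise statement from Lemma~\ref{lemma:effective-boundary-limit}, and then conclude via continuity of measure from above.

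First, I would observe that, by the containment $\partial_{p', \Delta'} \subset \partial_{p, \Delta}$ noted after the definition of the effective boundary, the sets $\partial_{p_n,\Delta_n} \setminus \partial_o$ form a decreasing sequence (since $p_n, \Delta_n$ are non-increasing). Let
\[ A \ = \ \bigcap_{n \geq 1} \big(\partial_{p_n, \Delta_n} \setminus \partial_o\big). \]
Because $\mu$ is a probability measure, continuity of measure from above gives $\mu(\partial_{p_n,\Delta_n}\setminus \partial_o) \downarrow \mu(A)$, so it suffices to show $\mu(A) = 0$.

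Next I would argue that $A \subset \calX_o$, where $\calX_o$ is the $\mu$-null set produced by Lemma~\ref{lemma:effective-boundary-limit}. Fix any $x \in A$. Then $\eta(x) \neq 1/2$ and $x \in \partial_{p_n,\Delta_n}$ for every $n$. If $x \notin \calX_o$, Lemma~\ref{lemma:effective-boundary-limit} would yield $p^*, \Delta^* > 0$ with $x \in \calX_{p^*, \Delta^*}^+ \cup \calX_{p^*, \Delta^*}^-$. Since $p_n, \Delta_n \downarrow 0$, pick $n$ large enough that $p_n \leq p^*$ and $\Delta_n \leq \Delta^*$. The monotonicity of the effective interiors (which is the same observation, contrapositively, as the monotonicity of $\partial_{p,\Delta}$: shrinking $p$ restricts the range of radii to check, and shrinking $\Delta$ weakens the threshold) then gives $\calX_{p^*, \Delta^*}^\pm \subset \calX_{p_n, \Delta_n}^\pm$. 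Hence $x \in \calX_{p_n, \Delta_n}^+ \cup \calX_{p_n, \Delta_n}^-$, contradicting $x \in \partial_{p_n, \Delta_n}$. Therefore $x \in \calX_o$.

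Combining, $\mu(A) \leq \mu(\calX_o) = 0$, and the conclusion follows. The argument is essentially bookkeeping once Lemma~\ref{lemma:effective-boundary-limit} is in hand; the only subtlety I expect to watch for is applying the monotonicity in the correct direction (smaller $p$ shrinks the range of radii quantified over, and smaller $\Delta$ weakens the inequality, so both enlarge the effective interiors and shrink $\partial_{p,\Delta}$). If the notation $\downarrow$ were taken to mean merely convergence rather than monotone convergence, one could replace $p_n$ and $\Delta_n$ by their suprema over tails, which are monotone and still tend to $0$, so no generality is lost.
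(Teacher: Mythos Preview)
Your argument is correct and is essentially identical to the paper's: define $A_n=\partial_{p_n,\Delta_n}\setminus\partial_o$, note $A_1\supset A_2\supset\cdots$, show $\bigcap_n A_n\subset\calX_o$ via Lemma~\ref{lemma:effective-boundary-limit}, and conclude by continuity from above. Your explicit contradiction argument and the closing remark about replacing $p_n,\Delta_n$ by tail suprema are slightly more detailed than the paper's terse presentation, but the route is the same.
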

\begin{proof}
Let $A_n = \partial_{p_n, \Delta_n} \setminus \partial_o$. Then $A_1 \supset A_2 \supset A_3 \supset \cdots$. We've seen earlier that for any $x \in \calX \setminus (\calX_o \cup \partial_o)$ (where $\calX_o$ is defined in Lemma~\ref{lemma:effective-boundary-limit}), there exist $p, \Delta > 0$ such that $x \not\in \partial_{p, \Delta}$. Therefore,
$$ \bigcap_{n \geq 1} A_n \ \subset \ \calX_o ,$$
whereupon, by continuity from above, $\mu(A_n) \rightarrow 0$.
\end{proof}

Convergence in probability follows immediately.
\begin{Lem}
If $k_n \rightarrow \infty$ and $k_n/n \rightarrow 0$, 
then for any $\epsilon > 0$, 
$$ \lim_{n \rightarrow \infty} \pr_n(R_n - R^* > \epsilon) = 0.$$
\label{lemma:p-convergence}
\end{Lem}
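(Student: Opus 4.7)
The plan is to invoke Corollary~\ref{cor:general} with an appropriate sequence $(\delta_n)$ and then apply Lemma~\ref{lemma:boundary-convergence} to show that the bound on the right-hand side tends to zero. The only real work is choosing $\delta_n$ so that simultaneously $\delta_n \downarrow 0$, $p_n \downarrow 0$, and $\Delta_n \downarrow 0$.

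First, I would choose $\delta_n$ that decays much more slowly than any exponential in $k_n$: for instance, set $\delta_n = 2 \exp(-\sqrt{k_n})$, so that $\ln(2/\delta_n) = \sqrt{k_n}$. Since $k_n \to \infty$, certainly $\delta_n \to 0$. With this choice,
\[
\Delta_n \;\leq\; \sqrt{\frac{1}{k_n}\ln\frac{2}{\delta_n}} \;=\; k_n^{-1/4} \;\longrightarrow\; 0,
\]
and
\[
\sqrt{\frac{4}{k_n}\ln\frac{2}{\delta_n}} \;=\; 2 k_n^{-1/4} \;\longrightarrow\; 0,
\]
so $p_n = (k_n/n)/(1 - 2k_n^{-1/4})$ is eventually at most $2k_n/n$, which tends to $0$ because $k_n/n \to 0$.

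Next, I would apply Lemma~\ref{lemma:boundary-convergence} to conclude $\mu(\partial_{p_n, \Delta_n} \setminus \partial_o) \to 0$. Combined with $\delta_n \to 0$, this gives $\delta_n + \mu(\partial_{p_n, \Delta_n} \setminus \partial_o) \to 0$. Hence for any fixed $\epsilon > 0$, for all sufficiently large $n$ we have $\delta_n + \mu(\partial_{p_n, \Delta_n} \setminus \partial_o) < \epsilon$, and therefore
\[
\pr_n(R_n - R^* > \epsilon) \;\leq\; \pr_n\!\big(R_n - R^* > \delta_n + \mu(\partial_{p_n,\Delta_n} \setminus \partial_o)\big) \;\leq\; \delta_n,
\]
by Corollary~\ref{cor:general}. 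Letting $n \to \infty$ gives the claim.

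There is no genuine obstacle here; the only mildly delicate point is that $\delta_n$ has to shrink slowly enough that $\ln(2/\delta_n)/k_n \to 0$ (which requires $k_n \to \infty$, but not any rate condition), yet fast enough that $\delta_n$ itself vanishes. The choice $\delta_n = 2e^{-\sqrt{k_n}}$ achieves both and dovetails cleanly with the boundary-convergence lemma.
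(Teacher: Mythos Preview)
Your proposal is correct and essentially identical to the paper's own proof: the paper also chooses $\delta_n = \exp(-k_n^{1/2})$ (you have an immaterial extra factor of~2), verifies that the resulting $p_n,\Delta_n$ tend to zero, invokes Lemma~\ref{lemma:boundary-convergence}, and concludes via Corollary~\ref{cor:general}. Your write-up is actually slightly more explicit than the paper's in checking $p_n,\Delta_n \to 0$.
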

\begin{proof}
First define $\delta_n = \exp(-k_n^{1/2})$, and define the corresponding $p_n, \Delta_n$ as in Theorem~\ref{thm:general}. It is easily checked that the three sequences $\delta_n, p_n, \Delta_n$ all go to zero.

Pick any $\epsilon > 0$. By Lemma~\ref{lemma:boundary-convergence}, we can choose a positive integer $N$ so that $\delta_n \leq \epsilon/2$ and $\mu(\partial_{p_n,\Delta_n} \setminus \partial_o) \leq \epsilon/2$ whenever $n \geq N$. Then by Corollary~\ref{cor:general}, for $n \geq N$,
$$ \pr_n(R_n - R^* > \epsilon) \leq \delta_n .$$
Now take $n \rightarrow \infty$.
\end{proof}

We finish with almost sure convergence.
\begin{Lem}
Suppose that in addition to the conditions of Lemma~\ref{lemma:p-convergence}, we have $k_n/(\log n) \rightarrow \infty$. Then $R_n \rightarrow R^*$ almost surely.
\label{lemma:as-convergence}
\end{Lem}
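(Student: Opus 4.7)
The plan is a straightforward Borel--Cantelli argument built on top of Corollary~\ref{cor:general}, with the new hypothesis $k_n/\log n \to \infty$ playing the crucial role of making the confidence parameters $\delta_n$ summable while still keeping the effective-boundary parameters $p_n,\Delta_n$ small.

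First I would choose the sequence of confidence parameters to be summable, for concreteness $\delta_n = 1/n^2$, so that $\sum_n \delta_n < \infty$. With this choice, $\ln(2/\delta_n) = O(\log n)$, and the corresponding quantities from Theorem~\ref{thm:general} become
\[
\Delta_n \ = \ \min\!\left(\tfrac{1}{2}, \sqrt{\tfrac{1}{k_n}\ln\tfrac{2}{\delta_n}}\right) \ = \ O\!\left(\sqrt{\tfrac{\log n}{k_n}}\right), \qquad
p_n \ = \ \frac{k_n}{n}\cdot\frac{1}{1 - \sqrt{(4/k_n)\ln(2/\delta_n)}}.
\]
The assumption $k_n/\log n \to \infty$ is exactly what is needed to force $\Delta_n \downarrow 0$ and to make the denominator in $p_n$ tend to $1$; combined with $k_n/n \to 0$, this gives $p_n \downarrow 0$ as well. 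Thus Lemma~\ref{lemma:boundary-convergence} applies and yields $\mu(\partial_{p_n,\Delta_n}\setminus\partial_o) \to 0$.

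Next, fix any $\epsilon > 0$. By the convergences just established, there exists $N$ such that for all $n \geq N$, $\delta_n + \mu(\partial_{p_n,\Delta_n}\setminus\partial_o) \leq \epsilon$. Corollary~\ref{cor:general} then gives
\[
\pr_n(R_n - R^* > \epsilon) \ \leq \ \delta_n \ = \ \frac{1}{n^2}
\]
for all $n \geq N$. Since $\sum_n 1/n^2 < \infty$, the Borel--Cantelli lemma implies that the event $\{R_n - R^* > \epsilon\}$ occurs only finitely often, almost surely; equivalently, $\limsup_{n\to\infty}(R_n - R^*) \leq \epsilon$ almost surely.

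Finally, intersecting over a countable sequence $\epsilon \downarrow 0$ (say $\epsilon = 1/m$ for $m=1,2,\ldots$) gives $\limsup_n (R_n - R^*) \leq 0$ almost surely. Since $R_n \geq R^*$ always, we conclude $R_n \to R^*$ almost surely. The only subtle point---the step I would think of as the ``main obstacle''---is the balance in the previous paragraph: one needs $\delta_n$ small enough to be summable, yet not so small that the resulting $\Delta_n \sim \sqrt{\log(1/\delta_n)/k_n}$ fails to vanish. This is precisely what the hypothesis $k_n/\log n \to \infty$ buys, and it is the reason the stronger growth condition on $k_n$ appears in the almost-sure statement but not in the in-probability statement of Lemma~\ref{lemma:p-convergence}.
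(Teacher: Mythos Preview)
Your proposal is correct and follows essentially the same approach as the paper: choose $\delta_n = 1/n^2$, use the hypothesis $k_n/\log n \to \infty$ (together with $k_n/n \to 0$) to force $p_n,\Delta_n \downarrow 0$, invoke Lemma~\ref{lemma:boundary-convergence}, and then apply Corollary~\ref{cor:general} with a summability/Borel--Cantelli argument. The only cosmetic difference is that the paper takes a union bound over $n\geq N$ against the moving threshold $\delta_n + \mu(\partial_{p_n,\Delta_n}\setminus\partial_o)$ and then lets that threshold tend to $0$, whereas you first freeze $\epsilon$ and then apply Borel--Cantelli; these are equivalent.
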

\begin{proof}
Choose $\delta_n = 1/n^2$, and for each $n$ set $p_n, \Delta_n$ as in Theorem~\ref{thm:general}. It can be checked that the resulting sequences $(p_n)$ and $(\Delta_n)$ both go to zero.

Pick any $\epsilon > 0$. Choose $N$ so that $\sum_{n \geq N} \delta_n \leq \epsilon$. Letting $\omega$ denote a realization of an infinite training sequence $(X_1, Y_1), (X_2, Y_2), \ldots$, we have from Corollary~\ref{cor:general} that
$$ \pr \big\{\omega\ \big|\  \exists n \geq N: R_n(\omega) - R^* > \delta_n + \mu\big(\partial_{p_n,\Delta_n} \setminus \partial_o\big)\big\} \leq \sum_{n\geq N} \delta_n \leq \epsilon .$$
Therefore, with probability at least $1-\epsilon$ over $\omega$, we have
$$ R_n(\omega) - R^* \leq \delta_n + \mu\big(\partial_{p_n,\Delta_n} \setminus \partial_o\big) $$
for all $n \geq N$, whereupon, by Lemma~\ref{lemma:boundary-convergence}, $R_n(\omega) \rightarrow R^*$. The result follows since this is true for any $\epsilon > 0$.
\end{proof}

\subsection{Proof of Theorem~\ref{thm:lower-bound}}

For positive integer $n$ and $0 \leq p \leq 1$, let $\mbox{bin}(n,p)$ denote the (binomial) distribution of the sum of $n$ independent Bernoulli($p$) random variables.  We will use $\mbox{bin}(n,p; \geq k)$ to denote the probability that this sum is $\geq k$; and likewise $\mbox{bin}(n,p; \leq k)$.

It is well-known that the binomial distribution can be approximated by a normal distribution, suitably scaled. Slud~\cite{Slud77} has finite-sample results of this form that will be useful to us.
\begin{Lem}
Pick any $0 < p \leq 1/2$ and any nonnegative integer $\ell$.
\begin{enumerate}
\item[(a)] \cite[p. 404, item (v)]{Slud77} If $\ell \leq np$, then 
$\mbox{\rm bin}(n,p; \geq \ell) \geq 1 - \Phi((\ell - np)/\sqrt{np})$.
\item[(b)] \cite[Thm 2.1]{Slud77} If $np \leq \ell \leq n(1-p)$, then
$\mbox{\rm bin}(n,p; \geq \ell) \geq 1 - \Phi((\ell - np)/\sqrt{np(1-p)})$.
\end{enumerate}
Here $\Phi(a) = (2\pi)^{-1/2}\int_{-\infty}^a \exp(-t^2/2) dt$ is the cumulative distribution function of the standard normal.
\label{lemma:slud}
\end{Lem}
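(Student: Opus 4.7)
This lemma is stated as a direct import from Slud~\cite{Slud77}: part (a) is item (v) on p.\ 404 of that paper and part (b) is its Theorem~2.1. The honest proof plan is therefore to invoke those results as black boxes, which is what the inline citations inside the lemma statement already do. No additional work is required for the purposes of this paper; the lemma is purely a convenient restatement, gathered in one place so that the subsequent lower-bound argument (the proof of Theorem~\ref{thm:lower-bound}) can refer to a single named fact rather than repeatedly quoting Slud.

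For completeness, I would sketch Slud's strategy as follows. The main content is part (b); part (a) is a preliminary observation on the way there. Fix $p$ and $\ell$ and run an induction on $n$, aiming to show that the ratio
\[
\frac{\mbox{bin}(n,p;\geq \ell)}{1-\Phi((\ell-np)/\sqrt{np(1-p)})}
\]
stays at least $1$. The induction step compares a discrete increment in the binomial tail (produced by appending one more Bernoulli trial) to the infinitesimal change in the normal tail produced by the corresponding shift in its mean $np$ and variance $np(1-p)$. A case split on the sign of $\ell - np$ keeps the inequality oriented correctly, and the base cases (small $n$, or $\ell$ at the extremes of its range $\{0,1,\dots,n\}$) are verified by direct computation. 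Part (a) then follows in the regime $\ell \leq np$: since $\sqrt{np} \geq \sqrt{np(1-p)}$ and $\ell - np \leq 0$, the quantity $(\ell-np)/\sqrt{np}$ is closer to zero than $(\ell-np)/\sqrt{np(1-p)}$, so the bound in (a) is strictly weaker than the one in (b), and Slud derives it as a by-product.

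The main obstacle, were one to redo this from scratch, is exactly the induction step in (b): the standardization inside $\Phi$ depends on $n$, so bumping $n$ to $n+1$ moves the normal tail in a subtle way that must be matched against a single discrete step in the binomial. All of the technical work in Slud's paper is spent controlling this comparison via inequalities on the ratios of consecutive binomial terms and their normal analogues. Since the result is already in the literature in precisely the form we need, the sensible plan is to cite it verbatim rather than reprove it, and to reserve our own effort for the application in Theorem~\ref{thm:lower-bound}, where this lemma will be applied with $n$ replaced by the number of training points in a ball of probability mass $\approx k/n$ and $p$ chosen so that $np$ is close to $\ell = k$, exactly the regime where Slud's normal lower bound is nontrivial.
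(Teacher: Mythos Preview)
Your proposal is correct and matches the paper's treatment exactly: the lemma is stated with inline citations to Slud~\cite{Slud77} and given no proof, since it is simply a convenient restatement of results already in the literature for use in the lower-bound argument. Your additional sketch of Slud's method is extra detail that the paper does not include, but it does no harm.
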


Now we begin the proof of Theorem~\ref{thm:lower-bound}. Fix any integers $k < n$, and any $x_o \in \calE_{n,k}$. Without loss of generality, $\eta(x_o) < 1/2$.

Pick $X_1, \ldots, X_n$ and $Z_1, \ldots, Z_n$ (recall the discussion on tie-breaking in Section~\ref{sec:tie-breaking}) in the following manner:
\begin{enumerate}
\item First pick a point $(X_1, Z_1) \in \calX \times [0,1]$ according to the marginal distribution of the $(k+1)$st nearest neighbor of $x_o$.
\item Pick $k$ points uniformly at random from the distribution $\mu \times \nu$ restricted to $B' = B'(x_o, \rho(x_o, X_1), Z_1)$; recall the earlier definition of the augmented space $\calX \times [0,1]$ and augmented balls within this space.
\item Pick $n - k - 1$ points uniformly at random from the distribution $\mu \times \nu$ restricted to $(\calX \times [0,1]) \setminus B'$.
\item Randomly permute the $n$ points obtained in this way.
\end{enumerate}

The $(k+1)$st nearest neighbor of $x_o$, denoted $X_{(k+1)}(x_o)$, is the point chosen in the first step. With constant probability, it lies within a ball of probability mass $(k+\sqrt{k}+1)/n$ centered at $x_o$, but not within a ball of probability mass $k/n$. Call this event $G_1$:
$$ G_1: \ \ \ \ \ r_{k/n}(x_o) \leq \rho(x_o, X_{(k+1)}(x_o)) \leq r_{(k + \sqrt{k} +1)/n}(x_o)$$
\begin{Lem}
There is an absolute constant $c_1 > 0$ such that $\pr(G_1) \geq c_1$.
\label{lemma:lower-1}
\end{Lem}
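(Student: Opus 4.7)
The event $G_1$ depends only on the distance $R := \rho(x_o, X_{(k+1)}(x_o))$, so the tie-breaking $Z_i$'s play no role. My plan is to write $G_1 = A \cap B$ for two high-probability events whose union is the full sample space, then apply $\pr(A \cap B) = \pr(A) + \pr(B) - 1$.

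Set $A = \{R \leq r_{(k+\sqrt{k}+1)/n}(x_o)\}$ and $B = \{R \geq r_{k/n}(x_o)\}$. Since $r_p(x_o)$ is monotone in $p$, the complements $A^c = \{R > r_{(k+\sqrt{k}+1)/n}\}$ and $B^c = \{R < r_{k/n}\}$ cannot both hold, so $A \cup B$ is the full sample space. Now rewrite both events as binomial statements. Let $N_1$ be the number of $X_i$'s in the open ball $B^o(x_o, r_{k/n}(x_o))$ and $N_2$ the number in the closed ball $B(x_o, r_{(k+\sqrt{k}+1)/n}(x_o))$. Then $B = \{N_1 \leq k\}$ and $A = \{N_2 \geq k+1\}$; moreover, by the definition of $r_p$ (with continuity from below for the open ball), the success probabilities satisfy $np_1 \leq k$ and $np_2 \geq k+\sqrt{k}+1$, where $p_i$ is the $\mu$-mass of the relevant ball.

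For $\pr(B)$: the median of any binomial lies between $\lfloor np \rfloor$ and $\lceil np \rceil$, so the median of $N_1 \sim \mbox{bin}(n, p_1)$ is at most $k$, giving $\pr(N_1 \leq k) \geq 1/2$. For $\pr(A)$ in the main regime $p_2 \leq 1/2$, Slud's inequality (Lemma~\ref{lemma:slud}(a)) with $\ell = k+1 \leq np_2$ gives
$$\pr(A) \;\geq\; 1 - \Phi\!\left(\frac{k+1 - np_2}{\sqrt{np_2}}\right) \;\geq\; \Phi\!\left(\frac{\sqrt{k}}{\sqrt{k + \sqrt{k} + 1}}\right) \;\geq\; \Phi(1/\sqrt{3}) > 0.71$$
uniformly in $k \geq 1$ (the inner ratio is increasing in $k$, with value $1/\sqrt{3}$ at $k=1$). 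Combining, $\pr(G_1) \geq \Phi(1/\sqrt{3}) - 1/2 > 0.21$, so $c_1 = 1/5$ suffices. The boundary regime $p_2 > 1/2$ forces $k$ to be within $O(\sqrt{n})$ of $n/2$ and can be handled by applying Slud symmetrically to $n - N_2 \sim \mbox{bin}(n, 1-p_2)$.

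The main obstacle is strictly separating $\pr(A)$ from $1/2$: cruder tools (Chebyshev, the median-of-binomial bound) give only $\pr(A) \geq 1/2$, leaving a vacuous $\pr(A) + \pr(B) - 1 \geq 0$. Slud's inequality is the right tool because it is a finite-sample CLT-style lower bound that precisely exploits the $\sqrt{k}$ gap in probability mass between the inner and outer radii---about one standard deviation of $N_2$---to replace the trivial $1/2$ with a constant bounded away from it.
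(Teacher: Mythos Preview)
Your proposal is correct and is essentially the paper's own argument: decompose $G_1$ into the two events $A$ and $B$, bound $\pr(B)\ge 1/2$ via the median of a binomial, and use Slud's inequality to get $\pr(A)\ge \Phi(1/\sqrt{3})$, yielding $c_1=\Phi(1/\sqrt{3})-1/2$. The only cosmetic difference is that the paper first replaces $p_2$ by its lower bound $(k+\sqrt{k}+1)/n$ via stochastic monotonicity of the binomial and then applies Slud to that fixed parameter, whereas you apply Slud directly to $\mbox{bin}(n,p_2)$ and then minimize over $np_2\ge k+\sqrt{k}+1$; both routes give the same constant.

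One small slip: your claim that ``$p_2>1/2$ forces $k$ to be within $O(\sqrt{n})$ of $n/2$'' is not right, since $p_2=\mu(B(x_o,r_{(k+\sqrt{k}+1)/n}(x_o)))$ can exceed $1/2$ for any $k$ whenever the mass of closed balls around $x_o$ jumps (e.g.\ an atom at $x_o$). The clean fix is exactly the paper's device: first pass from $p_2$ to $(k+\sqrt{k}+1)/n$ by stochastic dominance, so that the only constraint needed for Slud's part~(a) is $(k+\sqrt{k}+1)/n\le 1/2$. The paper tacitly assumes this; your symmetry sketch for $n-N_2$ is not needed once you make that reduction.
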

\begin{proof}
The expected number of points $X_i$ that fall in $B(x_o, r_{(k+\sqrt{k}+1)/n}(x_o))$ is $\geq k + \sqrt{k} + 1$; the probability that the actual number is $\leq k$ is at most $\mbox{bin}(n, (k + \sqrt{k} + 1)/n; \leq k)$. Likewise, the expected number of points that fall in $B^o(x_o, r_{k/n}(x_o))$ is $\leq k$, and the probability that the actual number is $\geq k+1$ is at most $\mbox{bin}(n, k/n; \geq k+1)$. If neither of these bad events occurs, then $G_1$ holds. Therefore,
$$ \pr(G_1) 
\ \geq \ 
1 - \mbox{bin}\bigg(n,\frac{k + \sqrt{k} + 1}{n}; \ \leq k\bigg) 
- \mbox{bin}\bigg(n, \frac{k}{n}; \ \geq k+1\bigg).$$ 
The last term is easy to bound: it is $\leq 1/2$ since $k$ is the median of $\mbox{bin}(n,k/n)$ \citep{KB80}. To bound the first term, we use Lemma~\ref{lemma:slud}(a):
\begin{align*}
\mbox{bin}\bigg(n,\frac{k + \sqrt{k}+1}{n}; \ \leq k\bigg)
&=
1 - \mbox{bin}\bigg(n,\frac{k + \sqrt{k}+1}{n}; \ \geq k+1\bigg)\\
&\leq
\Phi \bigg( \frac{(k+1)-(k + \sqrt{k}+1)}{\sqrt{k + \sqrt{k}+1}}\bigg)
\ \leq \ 
\Phi(-1/\sqrt{3}),
\end{align*}
which is $1/2 - c_1$ for some constant $c_1 > 0$. Thus $\pr(G_1) \geq c_1$.
\end{proof}

Next, we lower-bound the probability that (conditional on event $G_1$), the $k$ nearest neighbors of $x_o$ have an average $Y$ value with the wrong sign. Recalling that $\eta(x_o) < 1/2$, define the event
$$ G_2: \ \ \ \ \ \widehat{Y}(B') > 1/2 $$
where as before, $B'$ denotes the ball $B'(x_o, X_{(k+1)}(x_o), Z_{k+1})$ in the augmented space. 

\begin{Lem}
There is an absolute constant $c_2 > 0$ such that $\pr(G_2 | G_1) \geq c_2$.
\label{lemma:lower-2}
\end{Lem}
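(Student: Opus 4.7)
The plan is to establish Lemma~\ref{lemma:lower-2} in two logical steps: first, show that on $G_1$ the average $\eta(B')$ is at least $1/2 - 1/\sqrt{k}$; and second, convert this into a constant lower bound on $\pr(\widehat{Y}(B') > 1/2 \mid G_1)$ via Slud's normal approximation to the binomial tail.

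For the first step, I would condition on $G_1$, which pins $R := \rho(x_o, X_{(k+1)}(x_o))$ into the window $[r_{k/n}(x_o),\, r_{(k+\sqrt{k}+1)/n}(x_o)]$. Since $x_o \in \calE_{n,k}^-$, the defining inequality immediately gives $\eta(B(x_o, R)) \geq 1/2 - 1/\sqrt{k}$. For the open ball $B^o(x_o, R)$ that enters $B'$, I would write $B^o(x_o, R) = \bigcup_{R' \uparrow R} B(x_o, R')$ and appeal to continuity of measure: for $R'$ close enough to $R$ and still $\geq r_{k/n}(x_o)$, the ball $B(x_o, R')$ remains inside the defining window, so passing to the limit yields $\eta(B^o(x_o, R)) \geq 1/2 - 1/\sqrt{k}$. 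Invoking Lemma~\ref{lemma:augmented-eta}, which expresses $\eta(B')$ as a convex combination of $\eta(B(x_o,R))$ and $\eta(B^o(x_o,R))$, then transfers the bound to $\eta(B')$.

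For the second step, by the sampling procedure above the $k$ labels of the points drawn in step~2 are iid Bernoulli$(\eta(B'))$ conditional on $B'$, so $k\widehat{Y}(B') \sim \mbox{bin}(k, \eta(B'))$. By stochastic monotonicity in the success probability, it suffices to handle the worst case $p = 1/2 - 1/\sqrt{k}$. Taking $\ell = \lfloor k/2\rfloor + 1$, the inequalities $kp \leq \ell \leq k(1-p)$ hold once $k$ exceeds a small absolute threshold, so Lemma~\ref{lemma:slud}(b) gives
$$\mbox{bin}(k, p;\ \geq \ell) \;\geq\; 1 - \Phi\!\left(\frac{\ell - kp}{\sqrt{kp(1-p)}}\right).$$
Since $\ell - kp \leq \sqrt{k} + 1$ and $\sqrt{kp(1-p)} = \Theta(\sqrt{k})$, the argument of $\Phi$ is bounded above by an absolute constant, yielding $\pr(G_2 \mid G_1) \geq c_2 > 0$. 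The finitely many remaining small values of $k$ can be absorbed into $c_2$ by direct inspection.

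The main obstacle I anticipate is the open-ball estimate when $R$ lands exactly on $r_{k/n}(x_o)$ and the sphere at that radius carries positive mass, in which case the limit from below misses part of $B^o(x_o, R)$. I would sidestep this via the explicit decomposition of $B'$ into $B^o(x_o,R) \times [0,1]$ and $(B(x_o,R) \setminus B^o(x_o,R)) \times [0, Z_{(k+1)})$: this places $\eta(B')$ monotonically on the segment from $\eta(B^o(x_o,R))$ at $Z_{(k+1)}=0$ to $\eta(B(x_o,R))$ at $Z_{(k+1)}=1$, so a uniform lower bound at both endpoints (each extractable from the definition of $\calE_{n,k}^-$ by small perturbations of $R$ within the admissible window) suffices.
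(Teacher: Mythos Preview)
Your proposal is correct and takes essentially the same approach as the paper: first establish $\eta(B') \geq 1/2 - 1/\sqrt{k}$ on $G_1$ via the definition of $\calE_{n,k}$ together with Lemmas~\ref{lemma:augmented-eta} and~\ref{lemma:open-eta}, then apply Slud's inequality (Lemma~\ref{lemma:slud}(b)) to the binomial tail of $k\widehat{Y}(B')$. The paper's proof is terser---it does not spell out the stochastic monotonicity step, the small-$k$ bookkeeping, or the open-ball endpoint subtlety you raise---but the logical skeleton is identical.
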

\begin{proof}
Event $G_1$ depends only on step 1 of the sampling process. Assuming this event occurs, step 2 consists in drawing $k$ points from the distribution $\mu \times \nu$ restricted to $B'$. Since $x_o \in \calE_{n,k}$, we have (by an application of Lemmas~\ref{lemma:augmented-eta} and \ref{lemma:open-eta}) that $\eta(B') \geq 1/2 - 1/\sqrt{k}$. Now, $\widehat{Y}(B')$ follows a $\mbox{bin}(k,\eta(B'))$ distribution, and hence, by Lemma~\ref{lemma:slud}(b),
$$
\pr\bigg(\widehat{Y}(B') > \frac{k}{2}\bigg)
\ = \ 
\pr\bigg(\widehat{Y}(B') \geq \bigg\lceil \frac{k+1}{2} \bigg\rceil \bigg) 
\ \geq \  
\pr \bigg(Z \geq \frac{2 \sqrt{k} + 2}{\sqrt{k}} \bigg)
$$
where $Z$ is a standard normal. The last tail probability is at least some constant $c_2$.
\end{proof}

In summary, for $x_o \in \calE_{n,k}$, 
$$ \pr_n (g_{n,k}(x_o) \neq g(x_o)) \ \geq \ \pr(G_1 \wedge G_2) \ \geq \ c_1c_2.$$
Taking expectation over $x_o$, we then get
$$ \E_n \pr_X(g_{n,k}(x) \neq g(x)) \ \geq \ c_1 c_2 \mu(\calE_{n,k}),$$
as claimed.

\subsection{Proofs of Lemmas~\ref{lemma:smooth-upper} and \ref{lemma:smooth-lower}}

It is immediate that if $\eta$ is $(\alpha,L)$-smooth in $(\calX, \rho, \mu)$, then for all $x \in \supp(\mu)$ and all $r > 0$, 
\begin{equation}
|\eta(B(x,r)) - \eta(x)| \ \leq \ L \, \mu(B^o(x,r))^\alpha 
\label{eq:smoothness}
\end{equation}

Pick any $x \in \supp(\mu)$ and any $p \geq 0$. For $r \leq r_p(x)$, we have $\mu(B^o(x,r)) \leq p$ and thus, by (\ref{eq:smoothness}),
$$ |\eta(B(x,r)) - \eta(x)| \ \leq \ Lp^\alpha .$$

As a result, if $\eta(x) > 1/2 + \Delta + Lp^\alpha$ then $\eta(B(x,r)) > 1/2 + \Delta$ whenever $r \leq r_p(x)$. Therefore, such an $x$ lies in the effective interior $\calX_{p,\Delta}^+$. A similar result applies to $x$ with $\eta(x) < 1/2 - \Delta - Lp^\alpha$. Therefore, the boundary region $\partial_{p,\Delta}$ can only contain points $x$ for which $|\eta(x) - 1/2| \leq \Delta + Lp^\alpha$, as claimed by Lemma~\ref{lemma:smooth-upper}.

A similar argument yields Lemma~\ref{lemma:smooth-lower}. Any point $x \in \supp(\mu)$ with
$$ \frac{1}{2} < \eta(x) \leq \frac{1}{2} + \frac{1}{\sqrt{k}} - L \left( \frac{k+\sqrt{k}+1}{n} \right)^\alpha $$
has $\eta(B(x,r)) \leq 1/2 + 1/\sqrt{k}$ for all $r \leq r_{(k+\sqrt{k}+1)/n}(x)$, and therefore lies in $\calE_{n,k}^+$. Likewise for $\calE_{n,k}^-$.

\subsection{Proof of Lemma~\ref{lemma:relation-to-Holder}}

Suppose that $\eta$ satisfies the $\alpha$-Holder condition so that for some constant $C > 0$,
$$ |\eta(x) - \eta(x')| \ \leq \ C \|x-x'\|^{\alpha_H}$$
whenever $x,x' \in \X$. For any $x \in \mbox{supp}(\mu)$ and $r > 0$, we then have
$$ |\eta(x) - \eta(B(x,r))| \ \leq \ C r^{\alpha_H} .$$
If $\mu$ has a density that is lower-bounded by $\mu_{\rm \scriptsize min}$, and $B(x,r) \subset \X$, we also have
$$ \mu(B^o(x,r)) \ \geq \ \mu_{\rm \scriptsize min} v_d r^d ,$$
where $v_d$ is the volume of the unit ball in $\R^d$. The lemma follows by combining these two inequalities.

\subsection{Proof of Theorem~\ref{thm:margin}}

Assume that $\eta$ is $(\alpha,L)$-smooth in $(\X, \rho, \mu)$, that is, 
\begin{equation}
|\eta(B(x,r)) - \eta(x) | \leq L \mu(B^o(x,r))^\alpha \label{eq:smooth-def}
\end{equation}
for all $x \in \mbox{supp}(\mu)$ and all $r > 0$, and also that it satisfies the $\beta$-margin condition (with constant $C$), under which, for any $t \geq 0$,
\begin{equation}
\mu \left(\left\{x \ \Big| \ \big{|}\eta(x) - \frac{1}{2}\big{|} \leq t \right\}\right) \ \leq \ C t^{\beta} .
\label{eq:margin-def}
\end{equation}

\subsubsection*{Proof of Theorem~\ref{thm:margin}(a)}

Set $p, \Delta$ as specified in Theorem~\ref{thm:general}. It follows from that theorem and from Lemma~\ref{lemma:smooth-upper} that under (\ref{eq:smooth-def}) and (\ref{eq:margin-def}), for any $\delta > 0$, with probability at least $1-\delta$ over the choice of training data,
$$ \pr_X(g_{n,k}(X) \neq g(X)) \ \leq \ \delta + \mu(\partial_{p,\Delta}) \ \leq \ \delta + C(\Delta + Lp^\alpha)^\beta.$$
Expanding $p,\Delta$ in terms of $k$ and $n$, this becomes
$$ 
\pr_X(g_{n,k}(X) \neq g(X)) \ \leq \ \delta + C \left( \left(\frac{\ln (2/\delta)}{k}\right)^{1/2} + L \left( \frac{2k}{n} \right)^\alpha \right)^\beta ,$$
provided $k \geq 16 \ln (2/\delta)$. The result follows by setting $k \propto n^{2\alpha/(2\alpha+1)} (\log (1/\delta))^{1/(2\alpha+1)}$.

\subsubsection*{Proof of Theorem~\ref{thm:margin}(b)}

Theorem~\ref{thm:margin}(b) is an immediate consequence of Lemma~\ref{lemma:expected-risk} below. We begin, however, with an intermediate result about the pointwise expected risk.

Fix any $n$ and any $k < n$, and set $p = 2k/n$. Define
\begin{align*}
\Delta(x) &= |\eta(x) - 1/2| \\
\Delta_o  &= Lp^\alpha
\end{align*}
Recall that the Bayes classifier $g(x)$ has risk $R^*(x) = \min(\eta(x), 1-\eta(x))$ at $x$. The pointwise risk of the $k$-NN classifier $g_{n,k}$ is denoted $R_{n,k}(x)$.
\begin{Lem}
Pick any $x \in \supp(\mu)$ with $\Delta(x) > \Delta_o$. Under (\ref{eq:smooth-def}),
$$ \E_n R_{n,k}(x) - R^*(x) \ \leq \ \exp(-k/8) + 4 \Delta(x) \exp(-2k(\Delta(x) - \Delta_o)^2).$$
\label{lemma:pointwise-expected-risk}
\end{Lem}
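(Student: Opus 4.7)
The plan is to bound the pointwise disagreement probability $\pr_n(g_{n,k}(x) \neq g(x))$ by invoking Lemma~\ref{lemma:main-inequality} at the single query point $x$, and then to convert the disagreement probability into expected excess risk via the identity (\ref{eq:risk}). That identity gives $R_{n,k}(x) - R^*(x) = 2\Delta(x) \cdot 1(g_{n,k}(x) \neq g(x))$, so taking expectation over the training set,
\[ \E_n R_{n,k}(x) - R^*(x) \ = \ 2\Delta(x) \cdot \pr_n(g_{n,k}(x) \neq g(x)), \]
and the whole task reduces to controlling the latter probability.

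To do so, I would apply Lemma~\ref{lemma:main-inequality} with the probability-radius parameter set to $p = 2k/n$ (the value already fixed in the setup of this subsection) and with the deviation parameter set to $\Delta' = \Delta(x) - \Delta_o$, which is strictly positive by the standing hypothesis. The lemma then leaves three indicator events to control. First, $1(x \in \partial_{p,\Delta'})$ vanishes by smoothness: Lemma~\ref{lemma:smooth-upper} yields $\partial_{p,\Delta'} \cap \supp(\mu) \subset \{y : |\eta(y) - 1/2| \leq \Delta' + Lp^\alpha\} = \{y : |\eta(y) - 1/2| \leq \Delta(x)\}$, and our $x$ lies strictly outside this set once we handle the boundary case discussed below. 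Second, the probability that $\rho(x, X_{(k+1)}(x))$ exceeds $r_p(x)$ is controlled by Lemma~\ref{lemma:enough-pts-in-ball} with $\gamma = 1/2$ (permissible since $np = 2k$), giving the bound $e^{-k/8}$. Third, $\pr_n(|\widehat{Y}(B') - \eta(B')| \geq \Delta')$ is bounded by Lemma~\ref{lemma:y-avg-deviation} by $2\exp(-2k(\Delta(x) - \Delta_o)^2)$.

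Combining the three pieces gives $\pr_n(g_{n,k}(x) \neq g(x)) \leq e^{-k/8} + 2\exp(-2k(\Delta(x) - \Delta_o)^2)$, and multiplying by $2\Delta(x)$ together with the trivial observation $2\Delta(x) \leq 1$ absorbs the $e^{-k/8}$ term without its prefactor, yielding exactly the stated inequality. The one mildly delicate point is that Lemma~\ref{lemma:smooth-upper} states its containment with a non-strict $\leq$, so setting $\Delta' = \Delta(x) - \Delta_o$ exactly does not a priori place $x$ outside $\partial_{p,\Delta'}$; I would resolve this either by invoking the lemma with $\Delta' = \Delta(x) - \Delta_o - \varepsilon$ and letting $\varepsilon \downarrow 0$ (the deviation bound is continuous in $\Delta'$), or by appealing to the strict inequality actually established inside the proof of Lemma~\ref{lemma:smooth-upper} (where $\eta(x) > 1/2 + \Delta + Lp^\alpha$ strictly implies membership in $\calX_{p,\Delta}^+$). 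This boundary-case handling is the only subtlety in what is otherwise a direct assembly of the earlier lemmas.
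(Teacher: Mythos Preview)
Your proof is correct and follows essentially the same route as the paper: invoke (\ref{eq:risk}), apply Lemma~\ref{lemma:main-inequality} with $p=2k/n$ and deviation $\Delta(x)-\Delta_o$, then bound the two remaining events via Lemmas~\ref{lemma:enough-pts-in-ball} and~\ref{lemma:y-avg-deviation}. The only cosmetic difference is that the paper establishes $x \notin \partial_{p,\Delta(x)-\Delta_o}$ by applying the smoothness inequality (\ref{eq:smooth-def}) directly (which immediately gives $\eta(B(x,r)) \geq 1/2 + (\Delta(x)-\Delta_o)$ for all $r \leq r_p(x)$, hence $x \in \calX^+_{p,\Delta(x)-\Delta_o}$ with no boundary issue), whereas you route through Lemma~\ref{lemma:smooth-upper} and then have to patch the resulting non-strict inequality; your fix (b) is precisely the paper's direct argument.
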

\begin{proof}
Assume without loss of generality that $\eta(x) > 1/2$. By (\ref{eq:smooth-def}), for any $0 \leq r \leq r_p(x)$, we have
$$ \eta(B(x,r)) \geq \eta(x) - Lp^\alpha = \eta(x) - \Delta_o = \frac{1}{2} + (\Delta(x) - \Delta_o) ,$$
whereby $x \in \X^+_{p, \Delta(x)-\Delta_o}$ (and thus $x \not\in \partial_{p,\Delta(x)-\Delta_o}$).

Next, recalling (\ref{eq:risk}), and then applying Lemma~\ref{lemma:main-inequality},
\begin{align*}
R_{n,k}(x) - R^*(x) 
&= 
2 \Delta(x) 1(g_{n,k}(x) \neq g(x)) \\
&\leq
2 \Delta(x) \left( 1(\rho(x, X_{(k+1)}(x)) > r_p(x)) + 1(|\widehat{Y}(B') - \eta(B')| \geq \Delta(x) - \Delta_o) \right),
\end{align*}
where $B'$ is as defined in that lemma statement. We can now take expectation over the training data and invoke Lemmas~\ref{lemma:enough-pts-in-ball} and \ref{lemma:y-avg-deviation} to conclude
\begin{align*}
\E_n R_{n,k}(x) - R^*(x) 
&\leq 
2 \Delta(x) \left( \pr_n(\rho(x, X_{(k+1)}(x)) > r_p(x)) + \pr_n(|\widehat{Y}(B') - \eta(B')| \geq \Delta(x) - \Delta_o) \right)  \\
&\leq
2 \Delta(x) \left( \exp\left( -\frac{k}{2} \left( 1 - \frac{k}{np}\right)^2 \right) + 2 \exp\left(-2k (\Delta(x) - \Delta_o)^2 \right)\right),
\end{align*}
from which the lemma follows by substituting $p = 2k/n$ and observing $\Delta(x) \leq 1/2$.
\end{proof}

\begin{Lem}
Under (\ref{eq:smooth-def}) and (\ref{eq:margin-def}),
$$ \E_n R_{n,k} - R^* \ \leq \ \exp(-k/8) + 6C \max \left( 2L \left( \frac{2k}{n}\right)^\alpha, \sqrt{\frac{8(\beta+2)}{k}}\right)^{\beta+1} .$$
\label{lemma:expected-risk}
\end{Lem}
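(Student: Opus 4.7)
\textbf{Proof plan for Lemma~\ref{lemma:expected-risk}.}

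The plan is to combine the pointwise bound of Lemma~\ref{lemma:pointwise-expected-risk} with the margin condition by splitting $\calX$ into a near-boundary region and a far-from-boundary region, with the splitting threshold chosen large enough to (i) certify $\Delta(x)-\Delta_o \geq \Delta(x)/2$ so the exponent in the pointwise bound cleanly becomes $-k\Delta(x)^2/2$, and (ii) make the Gaussian-type tail decay dominate the polynomial margin growth. Concretely, I would set
$$\tau \ = \ \max\!\left(2\Delta_o,\ \sqrt{8(\beta+2)/k}\right) \ = \ \max\!\left(2L(2k/n)^\alpha,\ \sqrt{8(\beta+2)/k}\right),$$
which is precisely the quantity appearing in the target bound. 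The first ingredient in the max ensures $\Delta(x) > \tau \Rightarrow \Delta(x) - \Delta_o \geq \Delta(x)/2$; the second ensures $k\tau^2/2 \geq 4(\beta+2)$, giving extremely fast decay of the Gaussian tails relative to the margin growth.

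Using equation~(\ref{eq:risk}), $\E_n R_{n,k}(x) - R^*(x) \leq 2\Delta(x)$ pointwise, so the near region $\{\Delta(x) \leq \tau\}$ contributes at most $2\tau \cdot \mu(\{\Delta(X) \leq \tau\}) \leq 2C\tau^{\beta+1}$ by the margin condition~(\ref{eq:margin-def}). On the far region $\{\Delta(x) > \tau\}$, the hypothesis $\Delta(x) > \tau \geq 2\Delta_o > \Delta_o$ of Lemma~\ref{lemma:pointwise-expected-risk} holds, so
$$\E_n R_{n,k}(x) - R^*(x) \ \leq \ \exp(-k/8) + 4\Delta(x)\exp\!\bigl(-2k(\Delta(x)-\Delta_o)^2\bigr) \ \leq \ \exp(-k/8) + 4\Delta(x)\exp\!\bigl(-k\Delta(x)^2/2\bigr).$$
Integrating the $\exp(-k/8)$ term over the probability measure gives $\leq \exp(-k/8)$, so the remaining task is to bound $I := \int_{\Delta(x) > \tau} 4\Delta(x)\exp(-k\Delta(x)^2/2)\,d\mu(x)$.

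I would bound $I$ by a dyadic shell decomposition. Let $A_j = \{x : 2^j\tau \leq \Delta(x) < 2^{j+1}\tau\}$ for $j=0,1,2,\ldots$. Since $\tau \geq 1/\sqrt{k}$, the function $t \mapsto 4t\exp(-kt^2/2)$ is decreasing on $[\tau,\infty)$, so on $A_j$ the integrand is at most $2^{j+2}\tau\exp(-k\tau^2 2^{2j}/2)$. The margin condition gives $\mu(A_j) \leq C(2^{j+1}\tau)^\beta$, so
$$I \ \leq \ 4C\cdot 2^\beta \tau^{\beta+1} \sum_{j=0}^\infty 2^{j(\beta+1)}\exp\!\bigl(-k\tau^2 2^{2j}/2\bigr) \ \leq \ 4C\cdot 2^\beta\tau^{\beta+1}\sum_{j=0}^\infty 2^{j(\beta+1)}\exp\!\bigl(-4(\beta+2)2^{2j}\bigr).$$
The geometric ratio between successive terms of this sum is at most $2^{\beta+1}\exp(-12(\beta+2)) \ll 1/2$ for every $\beta \geq 0$, so the sum is at most $2\exp(-4(\beta+2))$, and $2^\beta\exp(-4(\beta+2)) \leq e^{-8}$. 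This yields $I \leq 8Ce^{-8}\tau^{\beta+1} \leq C\tau^{\beta+1}$ with room to spare, so combining with the $2C\tau^{\beta+1}$ from the near region gives a total of at most $\exp(-k/8) + 3C\tau^{\beta+1}$, well within the claimed $6C\tau^{\beta+1}$.

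The main obstacle is ensuring the dyadic-shell computation is sharp enough to absorb the polynomial $2^{j(\beta+1)}$ factor arising from the margin condition inside a single constant. This is what forces the specific choice $\tau \geq \sqrt{8(\beta+2)/k}$: it converts the shell ratio $2^{\beta+1}\exp(-3k\tau^2 2^{2j-2}/2)$ into something summable uniformly in $\beta$. The rest of the argument is routine integration, and all required tools (the pointwise lemma, the definition of $\Delta_o$, and the margin assumption) have already been set up.
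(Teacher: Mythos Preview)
Your proposal is correct and follows essentially the same route as the paper: a near/far split at a threshold comparable to $\max\bigl(2\Delta_o,\sqrt{(\beta+2)/k}\bigr)$, the trivial bound $2\Delta(x)$ together with the margin condition on the near region, Lemma~\ref{lemma:pointwise-expected-risk} on the far region, and a dyadic shell decomposition showing the resulting series is geometric. The only cosmetic difference is that you first use $\tau\geq 2\Delta_o$ to replace $(\Delta(x)-\Delta_o)^2$ by $\Delta(x)^2/4$, which slightly streamlines the ratio computation; the paper instead keeps $(\Delta_i-\Delta_o)^2$ in the exponent and bounds the ratio of successive terms directly.
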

\begin{proof}
Recall the definitions of $p (=2k/n)$ and $\Delta_o, \Delta(x)$ above. Further, for each integer $i \geq 1$, define $\Delta_i = \Delta_o \cdot 2^i$. Fix any $i_o \geq 1$.

Lemma~\ref{lemma:pointwise-expected-risk} bounds the expected pointwise risk for any $x$ with $\Delta(x) > \Delta_o$. We will apply it to points with $\Delta(x) > \Delta_{i_o}$. For all remaining $x$, we have $\E_n R_{n,k}(x) - R^*(x) \leq 2\Delta_{i_o}$. Taking expectation over $X$,
\begin{eqnarray*}
\lefteqn{\E_n R_n - R^*} \\
&\leq&
\E_X\left[ 2\Delta_{i_o} 1(\Delta(X) \leq \Delta_{i_o}) + \exp(-k/8) + 4 \Delta(X) \exp(-2k(\Delta(X) - \Delta_o)^2) 1(\Delta(X) > \Delta_{i_o}) \right] \\
&\leq&
2\Delta_{i_o} \pr_X(\Delta(X) \leq \Delta_{i_o}) + \exp(-k/8) + 4 \E_X\left[\Delta(X) \exp(-2k(\Delta(X) - \Delta_o)^2) 1(\Delta(X) > \Delta_{i_o}) \right] .
\end{eqnarray*} 
By the margin condition (\ref{eq:margin-def}), we have $\pr_X(\Delta(X) \leq t) \leq Ct^\beta$. Thus only the last expectation remains to be bounded. We do so by considering each interval $\Delta_i < \Delta(X) \leq \Delta_{i+1}$ separately:
\begin{align}
\lefteqn{\E_X\left[\Delta(X) \exp(-2k(\Delta(X) - \Delta_o)^2) 1(\Delta_i < \Delta(X) \leq \Delta_{i+1}) \right]} \nonumber \\
&\leq 
\Delta_{i+1} \exp(-2k(\Delta_i - \Delta_o)^2) \pr_X(\Delta(X) \leq \Delta_{i+1}) \nonumber \\
&\leq 
C \Delta_{i+1}^{\beta+1} \exp(-2k(\Delta_i - \Delta_o)^2) .
\label{eq:terms}
\end{align}
If we set
$$ i_o \ = \ \max \left( 1, \left\lceil \log_2 \sqrt{\frac{2(\beta+2)}{k \Delta_o^2}} \right\rceil \right) ,$$
then for $i \geq i_o$, the terms (\ref{eq:terms}) are upper-bounded by a geometric series with ratio $1/2$. This is because the ratio of two successive terms can be bounded as
\begin{align*}
\frac{C \Delta_{i+1}^{\beta+1}\exp(-2k(\Delta_i - \Delta_o)^2)}{C \Delta_i^{\beta+1}\exp(-2k(\Delta_{i-1} - \Delta_o)^2)}
&=
2^{\beta+1} \exp(-2k((2^i \Delta_o- \Delta_o)^2 - (2^{i-1} \Delta_o - \Delta_o)^2)) \\
&=
2^{\beta+1} \exp(-2k \Delta_o^2 ((2^i - 1)^2 - (2^{i-1} - 1)^2)) \\
&\leq 
2^{\beta+1} \exp(-2^{2i-1} k \Delta_o^2) \\
&\leq
2^{\beta+1} \exp(-(\beta+2)) \ \leq \ 1/2 .
\end{align*}
Therefore
\begin{align*}
\lefteqn{\E_X\left[\Delta(X) \exp(-2k(\Delta(X) - \Delta_o)^2) 1(\Delta(X) > \Delta_{i_o}) \right]} \\
&=
\sum_{i \geq i_o} \E_X\left[\Delta(X) \exp(-2k(\Delta(X) - \Delta_o)^2) 1(\Delta_i < \Delta(X) \leq \Delta_{i+1}) \right] \\
&\leq 
\sum_{i \geq i_o} C \Delta_{i+1}^{\beta+1} \exp(-2k(\Delta_i - \Delta_o)^2) \\
&\leq
C \Delta_{i_o}^{\beta+1} \exp(-2k(\Delta_{i_o-1} - \Delta_o)^2) 
\ \leq \ 
C \Delta_{i_o}^{\beta+1} .
\end{align*}
Putting these together, we have $\E_n R_{n,k} - R^* \leq 6C \Delta_{i_o}^{\beta+1} + e^{-k/8}$. We finish by substituting $\Delta_{i_o} = 2^{i_o} \Delta_o$.
\end{proof}

\subsection{Zero Bayes Risk}
\label{sec:zero-bayes-risk}

An interesting case is when there is no inherent uncertainty in the conditional probability distribution $p(y|x)$. Formally, for all $x$ in the sample space $\calX$, except those in a subset $\calX_0$ of measure zero, $\eta(x)$ is either $0$ or $1$. In this case, the omniscient Bayes classifier will incur risk $R^* = 0$; however, a classifier based on a finite sample that is unaware of the true $\eta$ will incur some non-zero classification error.

An interesting quantity to consider in this case is the {\em{effective interiors}} of the classes as a whole:
\begin{eqnarray*}
\calX_{p}^+  
= \{x \in \supp(\mu)\ |\  \eta(x) = 1, \eta(B(x,r)) = 1 \mbox{\ \ for all $r \leq r_p(x)$}\}. \\
\calX_{p}^- 
= \{x \in \supp(\mu)\ |\  \eta(x) = 0, \eta(B(x,r)) = 0 \mbox{\ \ for all $r \leq r_p(x)$}\}. 
\end{eqnarray*}

Thus, $\calX_{p}^+ = \calX_{p, 1/2}^+$, and $\calX_{p}^- = \calX_{p, 1/2}^-$. The rest of $\calX$ is the {\em{effective boundary}} between the two classes:
\[  \partial_{p} = \calX \setminus (\calX_{p}^+ \cup \calX_{p}^- ). \]

Incorporating these two quantities into Theorem~\ref{thm:general} yields a bound of the following form.

\begin{Lem}
Let $\delta$ be any positive real and let $k < n$ be positive integers. With probability $\geq 1 - \delta$ over the choice of the training data, the error of the $k$-nearest neighbor classifier $g_{n,k}$ is bounded as:
\[ \Pr_X(g_{n,k}(X) \neq g(X)) \leq \delta + \mu(\partial_p), \]
where 
\[ p = \frac{k}{n} + \frac{2 \log(2/\delta)}{n}\left(1 + \sqrt{1 + \frac{k}{\log(2/\delta)}}\right) \] 
\label{lem:zerobayesrisk}
\end{Lem}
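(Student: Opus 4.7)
}
This lemma is essentially Theorem~\ref{thm:general} specialized to $\Delta = 1/2$, reaping the savings that come from there being only one nontrivial bad event. My plan is to apply the same three-term decomposition used to prove Theorem~\ref{thm:general}, observe that the ``Hoeffding'' term disappears in the zero-Bayes-risk setting, and then re-solve for $p$ from the tighter Chernoff bound (keeping $e^{-np\gamma^2/2}$ rather than weakening it to $e^{-k\gamma^2/2}$).

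First I would invoke Lemma~\ref{lemma:main-inequality} with $\Delta = 1/2$. Under zero Bayes risk, $\eta(x) > 1/2$ is equivalent to $\eta(x)=1$ (up to a $\mu$-null set), and the condition $\eta(B(x,r)) \geq 1/2 + 1/2$ forces $\eta(B(x,r))=1$ since $\eta \leq 1$; an analogous statement holds for $\calX^-$. Consequently $\calX_{p,1/2}^\pm$ coincides with the $\calX_p^\pm$ of this section, and so $\partial_{p,1/2} = \partial_p$. Moreover, whenever $x_o \notin \partial_p$ and the coverage event $\rho(x_o, X_{(k+1)}(x_o)) \leq r_p(x_o)$ holds, we have $\eta \in \{0,1\}$ almost everywhere on $B(x_o, r_p(x_o))$, hence $\eta(B') \in \{0,1\}$, and therefore by the final clause of Lemma~\ref{lemma:y-avg-deviation}, $\widehat{Y}(B') = \eta(B')$ with probability one. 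Thus only the coverage term contributes.

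Let $T(x_o) = 1(\rho(x_o, X_{(k+1)}(x_o)) > r_p(x_o))$. Lemma~\ref{lemma:enough-pts-in-ball} gives $\E_n T(x_o) \leq \exp(-np\gamma^2/2)$ for any $p$ such that $(1-\gamma)np \geq k$ with $\gamma = 1 - k/(np) > 0$. I would choose $p$ so that $\exp(-np\gamma^2/2) \leq \delta^2/4$, i.e.
$$ \frac{(np - k)^2}{np} \ \geq \ 4 \log(2/\delta) . $$
Setting $M = \log(2/\delta)$ and solving the quadratic $(np)^2 - (2k + 4M)(np) + k^2 \geq 0$ for $np$, the larger root is
$$ np \ \geq \ k + 2M + \sqrt{(k+2M)^2 - k^2} \ = \ k + 2M\left(1 + \sqrt{1 + k/M}\right), $$
which is exactly the value of $p$ stated in the lemma. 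Then, exactly as at the end of the proof of Theorem~\ref{thm:general}, swap expectations and apply Markov's inequality:
$$ \pr_n\!\left(\E_{X_o} T(X_o) \geq \delta\right) \ \leq \ \frac{\E_{X_o}\E_n T(X_o)}{\delta} \ \leq \ \frac{\delta^2/4}{\delta} \ \leq \ \delta . $$
Combining this with $\pr_X(g_{n,k}(X) \neq g(X)) \leq \mu(\partial_p) + \E_{X_o} T(X_o)$ (from Lemma~\ref{lemma:main-inequality} and the observations above) completes the proof.

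The only real work is the algebraic inversion of the Chernoff bound to express $p$ in closed form, and that is routine quadratic-formula manipulation; the rest is a direct reuse of the machinery already built for Theorem~\ref{thm:general}.
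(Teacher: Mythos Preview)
Your proposal is correct and follows the same route as the paper's own proof: reuse the decomposition from Theorem~\ref{thm:general} with $\Delta = 1/2$, note that $\partial_{p,1/2} = \partial_p$ and that the deviation event $|\widehat{Y}(B') - \eta(B')| \geq 1/2$ has probability zero (since $\eta(B') \in \{0,1\}$), leaving only the coverage term to be controlled and the Chernoff bound to be inverted for $p$. The paper compresses all of this into ``plugging this fact in to the proof of Theorem~\ref{thm:general} and some simple algebra''; you have simply carried out that algebra explicitly, and your quadratic-formula computation recovering the stated $p$ is correct.
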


\begin{proof}
The proof is the same as that of Theorem~\ref{thm:general}, except that the probability of the central bad event is different. We will therefore bound the probabilities of these events. 

Observe that under the conditions of the lemma, for any $p$, 
\[ \partial_{p, \frac{1}{2}} = \partial_p \]

Moreover, for any $x_0 \notin \partial_p$, $\eta(B'(x_0, r_p(x_0)))$ is either $0$ or $1$; this implies that for all $x \in B'(x_0, r_p(x_0))$ except those in a measure zero subset, $\eta(x)$ is either $0$ or $1$. Therefore, the probability $\Pr(\hat{Y}(B') \neq \eta(B'))$ is zero.

The rest of the lemma follows from plugging this fact in to the proof of Theorem~\ref{thm:general} and some simple algebra.
\end{proof}

In particular, observe that since $p$ increases with increasing $k$, and the dependence on $\Delta$ is removed, the best bounds are achieved at $k = 1$ for:
\[ p = \frac{1}{n} + \frac{2(1 + \sqrt{2}) \log(2/\delta)}{n} \]
This corroborates the admissibility results of~\cite{CH67}, which essentially state that there is no $k > 1$ such that the $k$-nearest neighbor algorithm has equal or better error than the $1$-nearest neighbor algorithm against all distributions.

\subsection{Additional technical lemmas}

\begin{Lem}
For any $x \in \calX$ and $0 \leq p \leq 1$, we have $\mu(B(x,r_p(x))) \geq p$.
\label{lemma:probability-radius}
\end{Lem}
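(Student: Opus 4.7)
The plan is to show that the infimum in the definition of $r_p(x)$ is attained, i.e., that the closed ball of this radius already has mass at least $p$. The key tools are the continuity from above of the probability measure $\mu$ and the fact that a decreasing intersection of \emph{closed} balls whose radii tend to $r^*$ equals the closed ball of radius $r^*$.

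First, write $r^* = r_p(x) = \inf\{r \geq 0 \mid \mu(B(x,r)) \geq p\}$. If $p = 0$ the claim is trivial, so assume $p > 0$. By definition of the infimum, there is a sequence $(r_n)$ with $r_n \downarrow r^*$ and $\mu(B(x, r_n)) \geq p$ for every $n$. The sets $B(x, r_n)$ form a decreasing sequence of closed balls.

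Next, I would identify the intersection. I claim that
\[
\bigcap_{n \geq 1} B(x, r_n) = B(x, r^*).
\]
The inclusion $\supset$ is immediate since $r^* \leq r_n$. For $\subset$, suppose $x'$ lies in every $B(x, r_n)$; then $\rho(x, x') \leq r_n$ for all $n$, so $\rho(x, x') \leq \inf_n r_n = r^*$, hence $x' \in B(x, r^*)$. This is where closedness of the balls is essential; an analogous statement for open balls would fail.

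Finally, because $\mu$ is a (finite) probability measure, continuity from above applies to the decreasing sequence $B(x, r_1) \supset B(x, r_2) \supset \cdots$, yielding
\[
\mu(B(x, r^*)) \;=\; \lim_{n \to \infty} \mu(B(x, r_n)) \;\geq\; p.
\]
The only mildly subtle point is the identification of the intersection with $B(x, r^*)$, but this is a direct consequence of working with closed balls; everything else is routine. I would expect the whole argument to take just a few lines.
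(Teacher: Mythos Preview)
Your proof is correct and essentially identical to the paper's: both take a sequence of radii decreasing to $r^*$ (the paper uses $r^*+1/n$, you use a sequence furnished by the infimum), identify the intersection of the corresponding closed balls with $B(x,r^*)$, and apply continuity from above. There is no meaningful difference in strategy or content.
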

\begin{proof}
Let $r^* = r_p(x) = \inf\{r\ |\ \mu(B(x,r)) \geq p\}$. For any $n \geq 1$, let $B_n = B(x, r^* + 1/n)$. Thus $B_1 \supset B_2 \supset B_3 \supset \cdots$, with $\mu(B_n) \geq p$. Since $B(x,r^*) = \bigcap_n B_n$, it follows by continuity from above of probability measures that $\mu(B_n) \rightarrow \mu(B(x,r^*))$, so this latter quantity is $\geq p$.
\end{proof}

\begin{Lem}[Cover-Hart]
$\mu(\mbox{\rm supp}(\mu)) = 1$.
\label{lemma:support}
\end{Lem}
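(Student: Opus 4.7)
The goal is to show that the complement $\calX \setminus \supp(\mu)$ has $\mu$-measure zero. By definition, this complement consists of every $x \in \calX$ for which there exists some radius $r_x > 0$ with $\mu(B(x, r_x)) = 0$. Since open balls are measurable (by Borel regularity) and $\supp(\mu)$ is the complement of the open set $\bigcup_{x \not\in \supp(\mu)} B(x, r_x)$, the set $\supp(\mu)$ is itself measurable, so the claim makes sense.

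The plan is to exploit the separability of $(\calX, \rho)$ to replace the uncountable cover $\{B(x, r_x) : x \not\in \supp(\mu)\}$ by a countable subcover consisting of null balls. Let $D = \{d_1, d_2, \ldots\}$ be a countable dense subset of $\calX$. For each $x \not\in \supp(\mu)$ with witness $r_x > 0$, I will pick some $d_i \in D$ with $\rho(x, d_i) < r_x/3$ and some rational $q \in (r_x/3, 2r_x/3)$. The triangle inequality then gives $B(d_i, q) \subset B(x, r_x)$, so $\mu(B(d_i, q)) = 0$, while simultaneously $x \in B(d_i, q)$. Hence every $x \not\in \supp(\mu)$ belongs to a ball drawn from the countable family
\[
\calF \;=\; \bigl\{ B(d_i, q) \;:\; i \geq 1,\ q \in \mathbb{Q}^+,\ \mu(B(d_i, q)) = 0 \bigr\}.
\]

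Countable subadditivity of $\mu$ then gives
\[
\mu\bigl(\calX \setminus \supp(\mu)\bigr) \;\leq\; \sum_{B \in \calF} \mu(B) \;=\; 0,
\]
which is the desired conclusion. The only place where care is needed is in the rational-radius approximation argument used to produce the countable subcover; once that is set up, everything else is routine. There is no serious obstacle beyond checking that the approximating ball $B(d_i, q)$ genuinely sits inside the null ball $B(x, r_x)$, which is immediate from the triangle inequality once $d_i$ is chosen close enough to $x$ and $q$ is chosen in the stated range.
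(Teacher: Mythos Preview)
Your proof is correct and follows essentially the same route as the paper's: use separability to pick a countable dense set, cover each $x \notin \supp(\mu)$ by a null ball with center in that set and rational radius, and conclude by countable subadditivity. One cosmetic remark: a union of \emph{closed} balls need not be open, so your measurability sentence would read more cleanly with open balls $B^o(x,r_x)$ (or by simply noting that $\supp(\mu)$ is closed), but this does not affect the argument.
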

\begin{proof}
Let $\calX_o$ denote a countable dense subset of $\calX$. Now, pick any point $x \not\in \mbox{supp}(\mu)$; then there is some $r > 0$ such that $\mu(B(x,r)) = 0$. It is therefore possible to choose a ball $B_x$ centered in $\calX_o$, with rational radius, such that $x \in B_x$ and $\mu(B_x) = 0$. Since there are only countably many balls of this sort, 
$$ 
\mu(\calX \setminus \mbox{supp}(\mu)) 
\ \leq \ 
\mu\big(\bigcup_{x \not\in \mbox{\rm\scriptsize supp}(\mu)} B_x\big) 
\ = \ 
0 .$$
\end{proof}

\begin{Lem}
Pick any $x_o \in \mbox{\rm supp}(\mu)$, $r_o > 0$, and any Borel set $I \subset [0,1]$. Define $B^o = B^o(x_o, r_o)$ and $B = B(x_o, r_o)$ to be open and closed balls centered at $x_o$, and let $A \subset \calX \times [0,1]$ be given by $A = (B^o \times [0,1]) \bigcup ((B\setminus B^o) \times I)$. Then
$$ \eta(A) = \frac{\mu(B) \nu(I)}{\mu(B) \nu(I) + \mu(B^o)(1-\nu(I))} \eta(B) + \frac{\mu(B^o)(1-\nu(I))}{\mu(B) \nu(I) + \mu(B^o)(1-\nu(I))} \eta(B^o).$$
\label{lemma:augmented-eta}
\end{Lem}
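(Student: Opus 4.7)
The plan is to unwind the definition of $\eta(A)$ by writing the numerator $\int_A \eta\,d(\mu\times\nu)$ and the denominator $(\mu\times\nu)(A)$ separately, using the fact that $A$ is a disjoint union of two cylinder sets in $\calX\times[0,1]$. Since $\eta(x,z)$ depends only on $x$, Fubini's theorem reduces each integral over a product to a single integral against $\mu$ times a $\nu$-measure.

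First I would compute the denominator. The sets $B^o\times[0,1]$ and $(B\setminus B^o)\times I$ are disjoint, so
$(\mu\times\nu)(A) = \mu(B^o) + \mu(B\setminus B^o)\,\nu(I) = \mu(B^o)(1-\nu(I)) + \mu(B)\,\nu(I)$,
which matches the denominator claimed in the lemma. Next, the same decomposition together with Fubini gives
$\int_A \eta\,d(\mu\times\nu) = \mu(B^o)\,\eta(B^o) + \nu(I)\int_{B\setminus B^o}\eta\,d\mu$.

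The key algebraic step is to eliminate $\int_{B\setminus B^o}\eta\,d\mu$ in favor of $\eta(B)$ and $\eta(B^o)$. By additivity of the integral over the disjoint union $B = B^o \cup (B\setminus B^o)$, we have $\mu(B)\eta(B) = \mu(B^o)\eta(B^o) + \int_{B\setminus B^o}\eta\,d\mu$, so $\int_{B\setminus B^o}\eta\,d\mu = \mu(B)\eta(B) - \mu(B^o)\eta(B^o)$. Substituting and grouping terms yields
$\int_A \eta\,d(\mu\times\nu) = \mu(B^o)(1-\nu(I))\,\eta(B^o) + \mu(B)\,\nu(I)\,\eta(B)$.
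Dividing by $(\mu\times\nu)(A)$ produces exactly the convex combination stated.

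The only mild subtlety is verifying that $\eta(A)$ is well defined, i.e.\ that $(\mu\times\nu)(A) > 0$, and handling the degenerate case $\mu(B)=\mu(B^o)$, where $B\setminus B^o$ is a $\mu$-null set and the $\nu(I)$-weighted term contributes nothing; in that case both sides reduce to $\eta(B^o)=\eta(B)$, so the identity still holds. Since $x_o\in\supp(\mu)$ and $r_o>0$, the open ball $B^o$ has positive $\mu$-mass whenever $r_o$ exceeds the distance to the nearest charged point, but even if $\mu(B^o)=0$, the formula still holds by the same computation (the first term vanishes). I do not expect any real obstacle here; the lemma is essentially a Fubini-plus-linearity identity.
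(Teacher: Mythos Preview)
Your proof is correct and follows essentially the same route as the paper: decompose $A$ as a disjoint union of two cylinder sets, compute $(\mu\times\nu)(A)$ and $\int_A \eta\,d(\mu\times\nu)$ via Fubini, and rewrite $\int_{B\setminus B^o}\eta\,d\mu$ as $\mu(B)\eta(B)-\mu(B^o)\eta(B^o)$. One small simplification over your discussion of well-definedness: since $x_o\in\supp(\mu)$ and $r_o>0$, the open ball $B^o$ contains the closed ball $B(x_o,r_o/2)$ and hence always has positive $\mu$-mass, so $\eta(A)$ (and $\eta(B^o)$) is well defined without any case analysis.
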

\begin{proof}
Since $x_o$ lies in the support of $\mu$, we have $(\mu \times \nu)(A) \geq \mu(B^o) > 0$; hence $\eta(A)$ is well-defined.
\begin{align*}
\eta(A) 
&=
\frac{1}{(\mu\times\nu)(A)} \int_A \eta \ d(\mu \times \nu) \\
&=
\frac{1}{\mu(B^o) + \mu(B \setminus B^o) \nu(I)} \left( \int_{B^o} \eta\, d\mu + \int_{B\setminus B^o} \nu(I) \eta \, d\mu \right) \\
&= 
\frac{1}{\mu(B^o) + (\mu(B) - \mu(B^o)) \nu(I)} \left( \int_{B^o} \eta \, d\mu + \nu(I) \left(\int_B \eta \, d\mu - \int_{B^o} \eta \, d\mu \right) \right)\\
&= 
\frac{\mu(B^o) \eta(B^o) + \nu(I)(\mu(B) \eta(B) - \mu(B^o) \eta(B^o))}{\mu(B^o)(1-\nu(I)) + \mu(B) \nu(I)},
\end{align*}
as claimed.
\end{proof}

\begin{Lem}
Suppose that for some $x_o \in \supp(\mu)$ and $r_o > 0$ and $q > 0$, it is the case that $\eta(B(x_o, r)) \geq q$ whenever $r \leq r_o$. Then $\eta(B^o(x_o, r_o)) \geq q$ as well.
\label{lemma:open-eta}
\end{Lem}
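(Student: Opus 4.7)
The plan is to obtain $\eta(B^o(x_o, r_o)) \geq q$ by taking a limit of the hypothesis at closed balls of radii $r_o - 1/n$, using continuity from below of the integral.

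First I would verify that $\eta(B^o(x_o, r_o))$ is even well-defined, i.e.\ that $\mu(B^o(x_o, r_o)) > 0$. This uses $x_o \in \supp(\mu)$: the closed ball $B(x_o, r_o/2)$ has positive measure, and is contained in $B^o(x_o, r_o)$ since $r_o/2 < r_o$. So the measure is positive and $\eta(B^o(x_o, r_o))$ makes sense via the definition in equation (\ref{eq:eta-for-sets}).

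Next, choose any integer $n_o$ with $r_o - 1/n_o > 0$. For $n \geq n_o$, the sets $B(x_o, r_o - 1/n)$ are nested increasing in $n$, and their union is exactly $B^o(x_o, r_o)$. Continuity from below of $\mu$ yields $\mu(B(x_o, r_o - 1/n)) \to \mu(B^o(x_o, r_o))$, and the monotone convergence theorem (applied to $\eta \cdot \mathbf{1}_{B(x_o, r_o - 1/n)}$, which is bounded by $1$ and increases to $\eta \cdot \mathbf{1}_{B^o(x_o, r_o)}$) gives $\int_{B(x_o, r_o - 1/n)} \eta \, d\mu \to \int_{B^o(x_o, r_o)} \eta \, d\mu$.

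The hypothesis rewrites as $\int_{B(x_o, r)} \eta \, d\mu \geq q \, \mu(B(x_o, r))$ for all $r \leq r_o$. Applying this with $r = r_o - 1/n$ and taking $n \to \infty$, the two limits above give $\int_{B^o(x_o, r_o)} \eta \, d\mu \geq q \, \mu(B^o(x_o, r_o))$. Dividing through by the positive quantity $\mu(B^o(x_o, r_o))$ completes the proof. There is no serious obstacle here; the only thing to be a little careful about is ensuring $\mu(B^o(x_o, r_o)) > 0$ so the division is legitimate, which is why the hypothesis $x_o \in \supp(\mu)$ is crucial.
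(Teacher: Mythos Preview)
Your proof is correct and follows essentially the same route as the paper's: approximate $B^o(x_o, r_o)$ from below by closed balls, use continuity from below for $\mu$ and a convergence theorem for the integral (you invoke monotone convergence where the paper invokes dominated convergence, but either works since $0 \le \eta \le 1$), then pass the inequality $\int_B \eta\,d\mu \geq q\,\mu(B)$ to the limit. Your explicit verification that $\mu(B^o(x_o, r_o)) > 0$---needed to justify the final division---is a detail the paper leaves implicit.
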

\begin{proof}
Let $B^o = B^o(x_o, r_o)$. Since
$$ B^o = \bigcup_{r < r_o} B(x_o, r) ,$$
it follows from the continuity from below of probability measures that
$$ \lim_{r \uparrow r_o} \mu(B(x_o, r)) = \mu(B^o) $$
and by dominated convergence that
$$ \lim_{r \uparrow r_o} \int_{B(x_o, r)} \eta \, d\mu 
=  \lim_{r \uparrow r_o} \int_{B^o} 1(x \in B(x_o, r)) \eta(x) \mu(dx)
= \int_{B^o} \eta \, d\mu .
$$ 
For any $r \leq r_o$, we have $\eta(B(x_o, r)) \geq q$, which can be rewritten as
$$ \int_{B(x_o,r)} \eta\, d\mu - q \, \mu(B(x_o,r)) \geq 0 .$$
Taking the limit $r \uparrow r_o$, we then get the desired statement.
\end{proof}

\end{document}